\newtheorem{Def}{Definition}
\newtheorem{theorem}{Theorem}
\begin{document}
%
\title{V$^3$H: View Variation and View Heredity for Incomplete Multi-view Clustering}
%
%
%

\author{Xiang~Fang,
        Yuchong~Hu,~\IEEEmembership{Member,~IEEE,}
        Pan~Zhou,~\IEEEmembership{Senior Member,~IEEE,}
        and~Dapeng~Oliver~Wu,~\IEEEmembership{Fellow,~IEEE}
        
\thanks{This work is supported by National Natural Science Foundation of China (NSFC) under grant no. 61972448. (\emph{Corresponding author: Pan~Zhou}.)}
\thanks{X. Fang is with the School of Computer Science and Technology, Huazhong University of Science and Technology, Wuhan 430074, China (e-mail: xfang9508@gmail.com).}
\thanks{Y. Hu is with the School of Computer Science and Technology, Huazhong University of Science and Technology, Wuhan 430074, China (e-mail: yuchonghu@hust.edu.cn).}
\thanks{P. Zhou is with the Hubei Engineering Research Center on Big Data Security, School of Cyber Science and Engineering, Huazhong University of Science and Technology, Wuhan 430074, China (e-mail: panzhou@hust.edu.cn).}
\thanks{D. O. Wu is with the Department of Electrical and Computer Engineering, University of Florida, Gainesville, FL 32611 USA (e-mail: dpwu@ieee.org).}
\thanks{$\copyright$ 2021 IEEE. Personal use of this material is permitted.  Permission from IEEE must be obtained for all other uses, in any current or future media, including reprinting/republishing this material for advertising or promotional purposes, creating new collective works, for resale or redistribution to servers or lists, or reuse of any copyrighted component of this work in other works.}}
\maketitle

\begin{abstract}
Real data often appear in the form of multiple incomplete views. Incomplete multi-view clustering is an effective method to integrate these incomplete views.
Previous methods only learn the consistent information between different views and ignore the unique information of each view, which limits their clustering performance and generalizations. To overcome this limitation, we propose a novel \textbf{V}iew \textbf{V}ariation and \textbf{V}iew \textbf{H}eredity approach (V$^3$H). Inspired by the variation and the heredity in genetics, V$^3$H first decomposes each subspace into a variation matrix for the corresponding view and a heredity matrix for all the views to represent the unique information and the consistent information respectively. Then, by aligning different views based on their cluster indicator matrices, V$^3$H integrates the unique information from different views to improve the clustering performance. Finally, with the help of the adjustable low-rank representation based on the heredity matrix, V$^3$H recovers the underlying true data structure to reduce the influence of the large incompleteness.
%
%
More importantly, V$^3$H presents possibly the \textbf{first} work to introduce genetics to clustering algorithms for learning simultaneously the consistent information and the unique information from incomplete multi-view data. Extensive experimental results on fifteen benchmark datasets validate its superiority over other state-of-the-arts.
\end{abstract}
%
%
%

%
%
%
%
%
%
%
%
%
%

\begin{IEEEImpStatement}
Incomplete multi-view clustering is a popular technology to cluster incomplete datasets from multiple sources. The technology is becoming more significant due to the absence of the expensive requirement of labeling these datasets.
However, previous algorithms cannot fully learn the information of each view. Inspired by variation and heredity in genetics, our proposed algorithm V$^3$H fully learns the information of each view. Compared with the state-of-the-art algorithms, V$^3$H improves clustering performance by more than 20\% in representative cases. With the large improvement on multiple datasets, V$^3$H has wide potential applications including the analysis of pandemic, financial and election datasets. The DOI of our codes is 10.24433/CO.2119636.v1.
%
\end{IEEEImpStatement}

\begin{IEEEkeywords}
Incomplete multi-view clustering, View variation, View heredity.
\end{IEEEkeywords}

\ifCLASSOPTIONpeerreview
 \fi
%
\IEEEpeerreviewmaketitle

\section{Introduction}\label{section:intro}
\IEEEPARstart{I}{n} most real-world applications, the collected data always appear in multiple views or come from different sources~\cite{chao2018survey,liu2023exploring,wang2025taylor,fang2026towardsicml,kuai2026dynamic,wang2025point,fang2025your,zhang2025monoattack,fang2023hierarchical,liu2024towards,yang2025eood,fang2022multi,fang2026cogniVerse,lei2025exploring,fang2023you,wang2025dypolyseg,fang2025hierarchical,yan2026fit,fang2025adaptive,wang2026topadapter,cai2025imperceptible,fang2026slap,wang2026reasoning,fang2026immuno,wang2026biologically,fang2026disentangling,wang2025reducing,fang2026advancing,fang2026unveiling,wang2026from,liu2023conditional,liu2026attacking,fang2026rethinking,wang2025seeing,fang2026towards,fang2025multi,fang2024fewer,liu2024pandora,fang2024multi,fang2025turing,fang2024not,liu2023hypotheses,fang2024rethinking,liu2024unsupervised,fang2023annotations,xiong2024rethinking,fang2021unbalanced,wang2025prototype,zhang2025manipulating,fang2026align,tang2024reparameterization,fang2025adaptivetai,tang2025simplification,fang2021animc,cai2026towards,fang2020double}, which are called \emph{multi-view data}~\cite{CHAO2019278}. As an illustration, severe acute respiratory syndrome coronavirus 2 (SARS-CoV-2) is the strain of coronavirus that causes coronavirus disease 2019 (COVID-19)~\cite{REMUZZI20201225,10.1001/jama.2020.2565,ROTHAN2020102433}. When we analyze the structural proteins of SARS-CoV-2 to study vaccines, the S (spike), E (envelope), M (membrane), and N (nucleocapsid) proteins can serve as four views~\cite{WALLS2020281,10.1001/jama.2020.3786,doi:10.1056/NEJMc2001737,Yan1444}.  Also, the financial data often come from multiple sources, and each data source corresponds to a view. Multi-view clustering provides a natural way to integrate these views~\cite{bickel2004multi,chaudhuri2009multi,cai2013multi,chen2020multi}.

Recently, many multi-view clustering algorithms have been proposed~\cite{nie2020auto,han2020multi,yang2020uniform}. Most of them assume that each view is complete. But real-world data often suffer from incompleteness~\cite{zhao2016incomplete,xu2018partial,7840701}. For example, when detecting COVID-19, the blood test, the temperature measurement, and the neuroimage can be regarded as three views of the detection. But when the disease first broke out, most individuals only perform one or two tests due to the lack of detection conditions.
As such, this incompleteness may lead to the lack of columns or rows in the view matrix, which fails previous algorithms.

To cluster incomplete multi-view data, some efforts have been made.
PVC~\cite{li2014partial} aligns the same samples in different views by constructing a latent subspace.
To solve the incomplete multi-modality clustering problem, IMG~\cite{zhao2016incomplete} transforms the collected incomplete multi-view data to a complete representation in a latent space. To extend PVC to more than two views, MIC~\cite{shao2015multiple} extends MultiNMF~\cite{liu2013multi} based on weighted nonnegative matrix factorization (NMF)~\cite{lee1999learning} and $L_{2,1}$-norm regularization. To decrease the impact of a large missing rate, DAIMC~\cite{hu2018doubly} extends MIC by combining semi-nonnegative matrix factorization (semi-NMF)~\cite{ding2010convex} and $L_{2,1}$-Norm regularization regression.
To solve the multi-view co-clustering with incomplete data problem, \cite{CHAO2019278}
integrates complex patterns of incomplete multi-view data.
To explore the local structure, UEAF~\cite{wen2019unified} learns a consensus representation for all views.

However, these incomplete multi-view clustering methods still have three main drawbacks. First, these methods only consider the consistent information between all the views and ignore the unique information of each view. For example, DAIMC performs clustering by aligning the consistent information. When clustering the data with little alignment information, DAIMC cannot achieve satisfactory performance due to ignoring the unique information. The unique information from each view can help us better analyze the geometry of the corresponding view. Second, these methods are difficult to learn nonlinear information between samples. Most of these methods are based on NMF. NMF is a linear operation, which only extracts the linear structure information among samples from the data. When processing some datasets with nonlinear structures, these methods may ignore much important nonlinear information among samples, which limits the application of these methods. Third, these methods do not perform well in some datasets with relatively large missing rates.
Based on these samples presented in all views, these algorithms learn the structure information of datasets for clustering. As the missing rate increases, the number of presented samples will decrease significantly. Thus, these algorithms cannot learn enough information and will obtain poor clustering performance.
If we directly use these methods to process important data, it is difficult to learn accurate data information because of the above drawbacks. As an illustration, if we use these methods to directly analyze the data of SARS-CoV-2, these methods are difficult to obtain satisfactory results, which may lead to slow research progress on COVID-19.
Therefore, incomplete multi-view clustering still contains significant issues.

To address these issues, we propose a novel \textbf{V}iew \textbf{V}ariation and \textbf{V}iew \textbf{H}eredity approach (V$^3$H).
By introducing biological heredity and biological variation, the theory of genetics can effectively analyze the consistent trait information and the unique trait information in the biological world~\cite{dobzhansky1970genetics,mayr1970populations,hedrick2011genetics}.
Inspired by the theory, V$^3$H first learns a subspace from each view and decomposes each subspace into a heredity matrix shared by all the views and a variation matrix of the corresponding view.
The shared heredity matrix can extract the consistent information between all the views, while each variation matrix can learn the unique information of the corresponding view. Based on each variation matrix, V$^3$H constructs a graph Laplacian to obtain a corresponding cluster indicator matrix.
Then, V$^3$H aligns different views by minimizing the disagreement between each cluster indicator matrix and the consensus cluster indicator matrix. To measure the disagreement, V$^3$H introduces the linear kernel into the Laplacian for
spectral clustering.
Finally, instead of learning the low-rank representation of all the subspaces, V$^3$H designs an adjustable low-rank representation model via the $\eta$-norm of the variation matrix and the $\tau$-norm of error matrices.
V$^3$H's contributions are mainly summarized as follows:
\begin{itemize}
\item To our best knowledge, V$^3$H is a \textbf{pioneering} work to introduce genetics into the clustering algorithm, which will promote the intersection between the clustering algorithm and other disciplines. Moreover, it is also the \textbf{first} attempt to learn both the consistent information and the unique information of incomplete views simultaneously based on the subspace decomposition.
\item By minimizing the disagreement between each cluster indicator matrix and the consensus, V$^3$H can learn a satisfactory cluster indicator matrix for each view and integrate the unique information of these views, which improves its clustering performance.
    By introducing the linear kernel into the Laplacian,
    V$^3$H learns the nonlinear structure in the dataset, which guarantees its applicability in datasets with nonlinear structure.
\item Based on the adjustable low-rank representation model, V$^3$H can recover the underlying true data structure as needed, which helps us cluster the multi-view data with a relatively large missing rate.
\item Experimental results on fifteen benchmark datasets demonstrate the superiority of V$^3$H over other state-of-the-arts. Impressively, in terms of three evaluation metrics, V$^3$H improves the clustering performance by more than 20\% in representative cases.
\end{itemize} 

The rest of the paper is organized as follows.  Section~\ref{section:related} presents some related works.
Section~\ref{section:back} describes the notation and the background. Section~\ref{section:meth} first motivates V$^3$H's main idea, then proposes our V$^3$H approach, and finally solves it efficiently. Section~\ref{section:exp} evaluates V$^3$H's performance.
Section~\ref{section:con} concludes the paper.

\section{Related Works}  \label{section:related}
The most relevant work of this paper is incomplete multi-view clustering, and we present some related works in this section.
Recently, many incomplete multi-view clustering methods have been proposed~\cite{li2014partial,zhao2016incomplete,shao2015multiple,hu2018doubly,wen2019unified}. Based on the number of views clustered, we can divide these methods into the following two categories.

(i) Incomplete two-view clustering (e.g., PVC and IMG). Incomplete two-view clustering methods can only cluster incomplete data with two views.
PVC~\cite{li2014partial} learns the common and private latent spaces based on NMF~\cite{lee1999learning,cai2008sparse} and $L_{1}$-norm regularization. But PVC simply projects samples from each view into a common subspace and overlooks the global information among the two views.
To obtain better clustering performance on multi-modal visual datasets, IMG~\cite{zhao2016incomplete} extends PVC and removes the nonnegative constraint to simplify optimization. But both PVC and IMG can only solve the incomplete two-view clustering problem, which limits their application to incomplete data with more than two views.

(ii) Incomplete multi-view clustering (e.g., MIC, DAIMC and UEAF). Incomplete multi-view clustering methods can cluster incomplete data with more than two views.
As the first method for incomplete multi-view clustering, MIC~\cite{shao2015multiple} first fills the missing samples in each incomplete view with average feature values, then learns a common latent subspace based on weighted NMF and $L_{2,1}$-norm regularization.
But MIC only simply fills the missing samples with average feature values and if we cluster the data with a relatively large missing rate, this simply filling may result in a serious deviation.
To align the information of the presented samples, DAIMC~\cite{hu2018doubly} extends MIC via weighted semi-NMF~\cite{ding2010convex} and $L_{2,1}$-norm regularized regression. To obtain the robust clustering results, UEAF~\cite{wen2019unified} performs the unified common embedding aligned with incomplete views inferring framework. Both DAIMC and UEAF rely too much on alignment information. When clustering the dataset without enough alignment information, DAIMC and UEAF always obtain unsatisfactory performance because the loss of alignment information will reduce the availability of their models.

Note that besides three main drawbacks in Section~\ref{section:intro}, the previous methods have the above drawbacks. These drawbacks always result in unsatisfactory clustering performance, which limits the real-world applications of these methods.
%


\section{Notation and Background}  \label{section:back}
For convenience, we define some notations through the paper. All the matrices are written in uppercase. $[n]\overset{\text{def}}{=}\{1,2,\ldots,n\}$. For a matrix $\bm{A}$, its $ij$-th element and $i$-th column are denoted by $\bm{A}_{i,j}$ and $\bm{A}_i$ separately; its trace is denoted by $\text{Tr}(\bm{A})$; its Frobenius norm is denoted by $||\bm{A}||_F$; its $L_{2,1}$-norm is denoted by $||\bm{A}||_{2,1}$; its nuclear norm is denoted by $||\bm{A}||_{\ast}$. $|\cdot|$ is the absolute operator; $<\cdot,\cdot>$ is the inner product operator; $\bm{1}$ is a column vector with all elements as 1; $\bm{I}$ is an identity matrix. For a complete multi-view dataset, it is denoted by $\{\bm{U}^{(v)}\}_{v=1}^{n_v}\in \mathbb{R}^{n\times d_v}$, where $d_v$ is the feature dimension of the $v$-th view, $n$ is the sample number, and $n_v$ is the view number. When suffering from incompleteness, $\{\bm{U}^{(v)}\}_{v=1}^{n_v}$ can be updated to $\{\bm{X}^{(v)}\}_{v=1}^{n_v}\in \mathbb{R}^{m_v\times d_v}$, where $m_v$ is the number of presented samples in the $v$-th view. For the $v$-th view, $\bm{Z}^{(v)}$ is its subspace matrix; $\bm{N}^{(v)}$ is its variation matrix; $\bm{L}^{(v)}$ denotes its Laplacian graph; $\bm{E}^{(v)}$ is its error matrix; $\bm{F}^{(v)}$ is its cluster indicator matrix. For all the views of $\{\bm{X}^{(v)}\}_{v=1}^{n_v}$, $\bm{H}$ is the consensus cluster indicator matrix; $\bm{M}$ is the heredity matrix; $c$ is the cluster number. $\alpha$, $\beta$ and $\eta$ are nonnegative hyper-parameters.
\subsection{Incomplete Multi-view Data}  \label{subsection:incom}
The $v$-th original view matrix (including missing and presented samples) is represented as $\bm{U}^{(v)}\in \mathbb{R}^{n \times d_{v}}$, $n$ and $d_{v}$ are the number of samples and features, respectively.
By removing the missing samples, we can update the $v$-th original view matrix to a new view matrix $\bm{X}^{(v)}\in \mathbb{R}^{m_v\times d_v}$, where $m_v$ is the number of presented samples ($m_v<n$). To indicate the update,
we define an incomplete index matrix $\bm{W}^{(v)}\in \mathbb{R}^{m_v\times n}$~\cite{wen2019unified}
\begin{align}\label{W}
  \bm{W}_{i,j}^{(v)} \!=\! \left\{ \begin{array}{ll}
                     1, & \textrm{if the }i\textrm{-th sample is the }j\textrm{-th presented} \\ &\textrm{ sample in the }v\textrm{-th view;} \\
                     0, & \textrm{otherwise.}
                   \end{array}
  \right.
\end{align}
\subsection{Multi-view Subspace Clustering}  \label{subsection:subspace}
As an effective complete multi-view clustering method, multi-view subspace clustering (MVSC) integrates different views by first performing subspace clustering on each view and then unifying these subspaces to learn a cluster indicator matrix~\cite{gao2015multi}. The framework of MVSC is as follows
\begin{align}\label{zikongjian}
&\min\limits_{F}\sum_v(||\bm{U}^{(v)}-\bm{U}^{(v)}\bm{Z}^{(v)}-\bm{E}^{(v)}||_F^2+\beta||\bm{E}^{(v)}||_1\nonumber\\
&+\eta\text{Tr}(\bm{F}^T\bm{L}_Z^{(v)}\bm{F})) \nonumber\\
\mbox{s.t. }&\bm{F}^T\bm{F}=\bm{I},\bm{Z}^{(v)^T}\mathbf{1}=\mathbf{1},\bm{Z}_{i,i}^{(v)}=0,i\in[n],
\end{align}
where $\bm{F}$ is the cluster indicator matrix; for the $v$-th view, $\bm{L}_Z^{(v)}$ is its Laplacian graph.
 $\bm{Z}_{i,i}^{(v)}$=0 means that all diagonal elements of $\bm{Z}^{(v)}$ are 0.
\subsection{Biological Variation and Biological Heredity in Genetics} \label{subsection:shengwu}
In genetics, biological heredity denotes the passing on of traits from parents to their children; biological variation represents the unique trait information of their children.
By introducing biological heredity and biological variation, genetics provides some theories to analyze the consistent trait information and the unique trait information in the biological world~\cite{willham1963covariance,zhu1994analysis}.
In genetics, $\bm{P}$ denotes the observed trait information. Based on the theory for quantitative traits influenced by maternal~\cite{willham1963covariance,zhu1994analysis,visscher2008heritability}, $\bm{P}$ is partitioned as
\begin{align}\label{shengwugongshi}
  \bm{P}=\bm{B}_H+\bm{B}_V,
\end{align}
where
$\bm{B}_H$ denotes the biological heredity representation (i.e., consistent trait information) and $\bm{B}_V$ denotes the biological variation representation (i.e., unique trait information). Based on Eq.~\eqref{shengwugongshi}, genetics can explain the observed biological traits from the genetic level.
\begin{figure*}[t]
\centering
\includegraphics[width=1\textwidth]{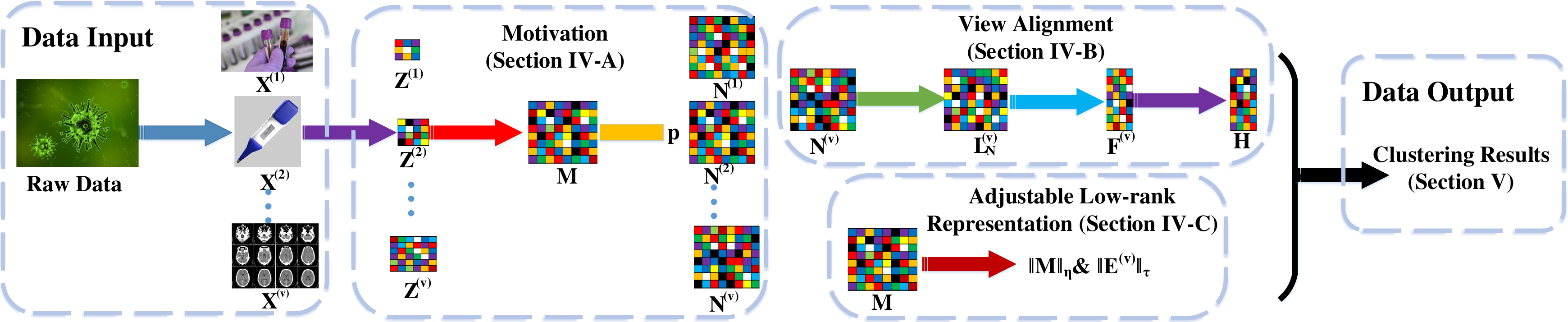}
\caption{The data flow of our proposed V$^3$H approach. Given an incomplete dataset $\{\bm{X}^{(v)}\}_{v=1}^{n_v}$,  V$^3$H learns the corresponding subspace matrices $\{\bm{Z}^{(v)}\}_{v=1}^{n_v}$. In Section~\ref{subsection:mot}, V$^3$H decomposes $\bm{Z}^{(v)}$ into a heredity matrix $\bm{M}$ and a variation matrix $\bm{N}^{(v)}$. In Section~\ref{subsection:bianyi}, by constructing a normalized graph Laplacian $\bm{L}_N^{(v)}$, V$^3$H aligns the cluster indicator matrix $\bm{F}^{(v)}$ with the consensus cluster indicator matrix $\bm{H}$. In Section~\ref{subsection:yichuan}, V$^3$H leverages $\eta$-norm and $\tau$-norm for adjustable low-rank representation. Section~\ref{section:exp} shows the clustering results.}
\label{fig:framework}
\end{figure*}
\section{Proposed V$^3$H Approach}   \label{section:meth}
By showing the  characteristics of incomplete multi-view data, we first present the motivation of our proposed V$^3$H approach. Then we model V$^3$H as the joint of the view variation and the view heredity. Finally, we design a seven-step  procedure to optimize V$^3$H.
\subsection{Motivation} \label{subsection:mot}
Real-world incomplete multi-view data have the two main characteristics~\cite{guo2019anchors}:
(i) the common samples presented in all views can be used to extract the consistent information from different views and to integrate these views; (ii) these samples existing in partial views can learn the unique information of the corresponding views.
Therefore, our motivation is to simultaneously learn consistent information and unique information from incomplete multi-view data for clustering.
Borrowing the idea of genetics in Section~\ref{subsection:shengwu}, we propose some definitions in Definition~\ref{def_qiangruo}, which relies on the following assumptions: (i) for an incomplete multi-view dataset $\{\bm{X}^{(v)}\}_{v=1}^{n_v}\in \mathbb{R}^{m_v\times d_v}$, each of its subspaces is the perturbation of a consensus subspace; (ii) each subspace can represent the data structure of the corresponding view.
\begin{Def}\label{def_qiangruo}
(Parent Subspace, Child Subspace, View Heredity, View Variation). For the $\{v\}_{v=1}^{n_v}$-th view matrix $\bm{X}^{(v)}\in\mathbb{R}^{m_v\times d_v}$, $\bm{Z}^{(v)}\in\mathbb{R}^{m_v\times m_v}$ denotes its subspace, which is calculated by Eq.~\eqref{zikongjian}. Assume that all the subspaces originate from a consensus subspace $\bm{Z}^{\ast}$. Thus, $\bm{Z}^{\ast}$ is defined as the parent subspace and $\bm{Z}^{(v)}$ is defined as the child subspace. $\bm{M}$ is defined as the heredity matrix, which represents the consistent information shared by all the views. $\bm{N}^{(v)}$ is defined as the variation matrix of the $v$-th view, which represents the unique information of the $v$-th view.
The view heredity is the phenomenon that the consistent information exists in both parent subspace and child subspace.
The view variation is the phenomenon that the unique information only exists in the corresponding child subspace.
In general, different subspaces $\{\bm{Z}^{(v)}\}_{v=1}^{n_v}$ have different variation matrices $\{\bm{Z}^{(v)}\}_{v=1}^{n_v}$, but share the same heredity matrix $\bm{M}$.
\end{Def}
Ideally, we want to obtain the optimal representation by learning the parent subspace $\bm{Z}^{\ast}$ because $\bm{Z}^{\ast}$ contains most of the available information on the data. However, it is difficult to learn an available $\bm{Z}^{\ast}$ due to missing samples and noises.

Therefore, we adopt a clever method to avoid learning $\bm{Z}^{\ast}$ directly. Instead, we can learn both the heredity matrix $\bm{M}$ and the variation matrix $\bm{N}^{(v)}$ as an alternative to learning $\bm{Z}^{\ast}$. For $\bm{Z}^{(v)}$, it can be decomposed into a heredity matrix $\bm{M}$ and a variation matrix $\bm{N}^{(v)}$. We assume that for a specific incomplete multi-view dataset, when the missing rate changes, the dimensions of its heredity matrix and variation matrices do not change. This assumption can guarantee that we can integrate the subspaces with different dimensions.

In Eq.~\eqref{shengwugongshi}, the genetics analyzes the influence of biological heredity and biological variation on biological traits.
Inspired by this, to integrate the heredity matrix and variation matrices, we design a subspace decomposition model as follows
\begin{align}\label{Zfenjie}
  \bm{W}^{(v)^T}\bm{Z}^{(v)}\bm{W}^{(v)}=\bm{M}-p\bm{N}^{(v)},
\end{align}
where $p$ is adjustable as needed; $\bm{W}^{(v)^T}\bm{Z}^{(v)}\bm{W}^{(v)}$ is composed of $\bm{M}\in\mathbb{R}^{n\times n}$ and $\bm{N}^{(v)}\in\mathbb{R}^{n\times n}$.
For Eq.~\eqref{shengwugongshi} and Eq.~\eqref{Zfenjie},
$\bm{W}^{(v)^T}\bm{Z}^{(v)}\bm{W}^{(v)}$ corresponds to $\bm{P}$; $\bm{M}$ corresponds to $\bm{B}_H$; $-p\bm{N}^{(v)}$ corresponds to $\bm{B}_V$.
Compared with $\bm{Z}^{(v)}\in\mathbb{R}^{m_v\times m_v}$, both $\bm{M}$ and $\bm{N}^{(v)}$ have larger matrix sizes ($n>m_v$). Since each column of $\bm{Z}^{(v)}$ corresponds to a sample, $\bm{Z}^{(v)}$ only contains the information of $m_v$ presented samples. Both $\bm{M}$ and $\bm{N}^{(v)}$ contain the information of $n$ samples (i.e., all the samples including the missing samples and the presented samples). Eq.~\eqref{Zfenjie} can learn three kinds of information: the consistent information between views (learned by $\bm{M}$), the unique information (learned by $\bm{N}^{(v)}$), and the relationship information between samples (learned by $\bm{W}^{(v)^T}\bm{Z}^{(v)}\bm{W}^{(v)}$). Since these three kinds of information are exactly expected in the clustering process, solving Eq.~\eqref{Zfenjie} can become a feasible alternative to learning $\bm{Z}^{\ast}$.
Note that we have three variables ($\bm{Z}^{(v)}$, $\bm{M}$, and $\bm{N}^{(v)}$), but we only have one equation (Eq.~\eqref{Zfenjie}). Therefore, in the next section, we add some constraints on $\bm{M}$ and $\bm{N}^{(v)}$ to solve Eq.~\eqref{Zfenjie}.
\subsection{View Variation for View Alignment} \label{subsection:bianyi}
Inspired by the phenomenon that the expression traits of parents and children are similar~\cite{cavalli1999genetics,feuk2006structural}, we attempt to align the expression traits of parent subspace and child subspace\footnote{In Section~\ref{subsection:mot}, we can note that one view matrix $\bm{X}^{(v)}$ and its subspace matrix $\bm{Z}^{(v)}$ are in one-to-one correspondence. For ease of description, we still refer to this subspace-based alignment as view alignment.}. We treat the cluster indicator matrices as the expression traits of the corresponding views because we can obtain better clustering results after aligning these cluster indicator matrices.
To formulate the alignment, we design the following view alignment model
\begin{align}\label{Npujulei}
\min\limits_{\bm{F}^{(v)},\bm{H}}\sum_v(\text{Tr}(\bm{F}^{(v)^T}\bm{L}_N^{(v)}\bm{F}^{(v)})+\gamma \text{Dis}(\bm{F}^{(v)},\bm{H})),
\end{align}
where $\gamma$ is a nonnegative hyper-parameter, $\bm{F}^{(v)}\in\mathbb{R}^{n\times c}$ is the cluster indicator matrix of the $v$-th view, and $\bm{H}\in\mathbb{R}^{n\times c}$ is the consensus cluster indicator matrix of all the views.  Tr($\bm{F}^{(v)^T}\bm{L}_N^{(v)}\bm{F}^{(v)}$) is used to learn the cluster indicator matrix of each view; Dis($\bm{F}^{(v)},\bm{H}$) means the disagreement between $\bm{F}^{(v)}$ and $\bm{H}$; $\bm{L}_N^{(v)}\in\mathbb{R}^{n\times n}$ is the normalized graph Laplacian, where $\bm{L}_N^{(v)}=\bm{G}^{(v)}-\bm{S}^{(v)}$, $\bm{S}^{(v)}=(|\bm{N}^{(v)}|+|\bm{N}^{(v)}|^T)/2$, and $\bm{G}_{i,i}^{(v)}=\sum_j\bm{S}_{i,j}^{(v)}$ ($i\in [n]$, $j\in [n]$).
For the measure of Dis($\bm{F}^{(v)},\bm{H}$), a popular method is using the linear kernel function, which is simple and widely used.
However, the linear kernel function can only learn linear structural information in most cases, and it is difficult to learn nonlinear information.
%
%
%


\begin{figure*}[t]
\centering
\subfigure[$\eta$-norm]{\label{fig:fanshuzuixin}
\includegraphics[width=0.47\textwidth]{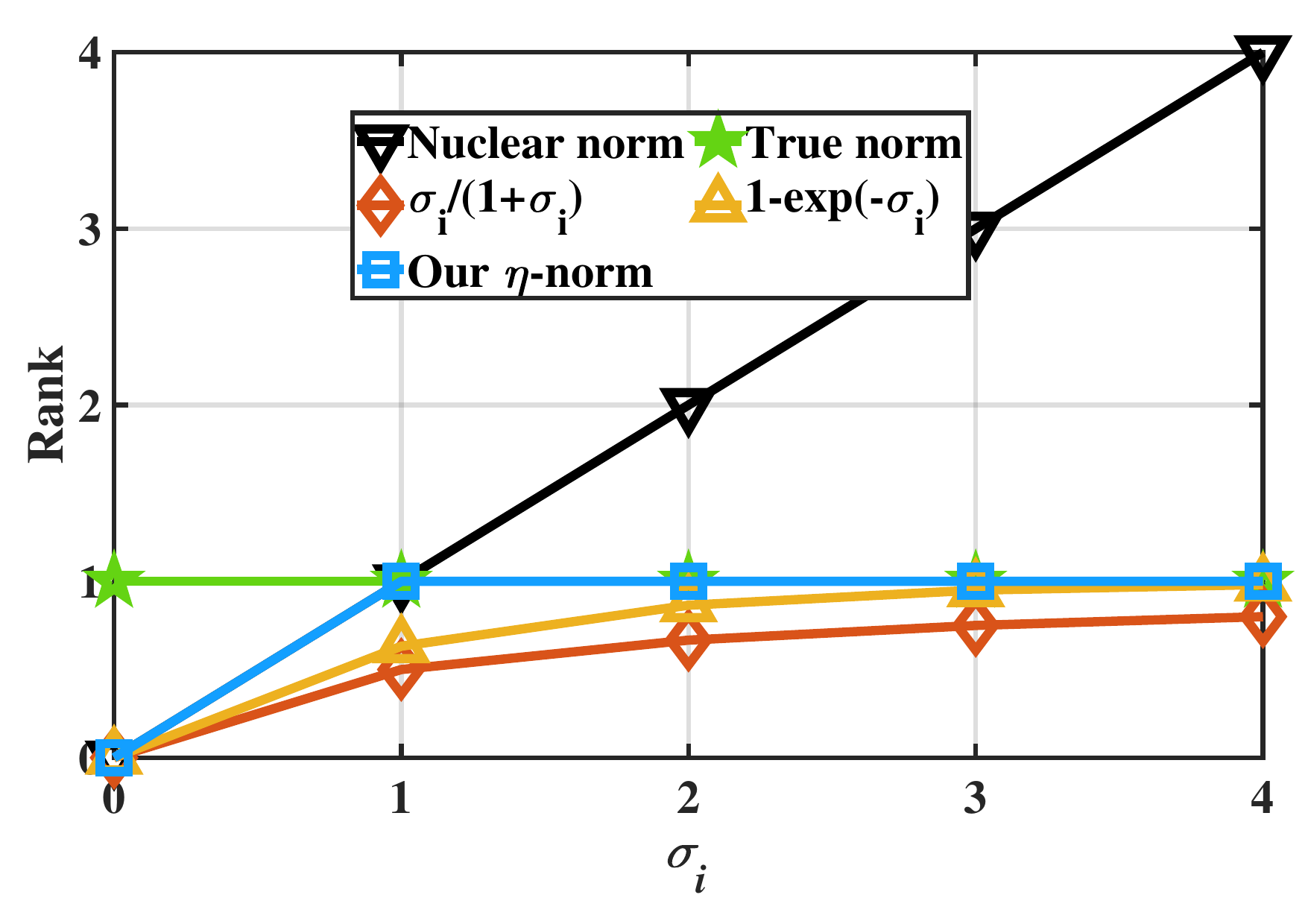}}
\subfigure[$\tau$-norm with different $\tau$]{\label{fig:taofanshu}
\includegraphics[width=0.47\textwidth]{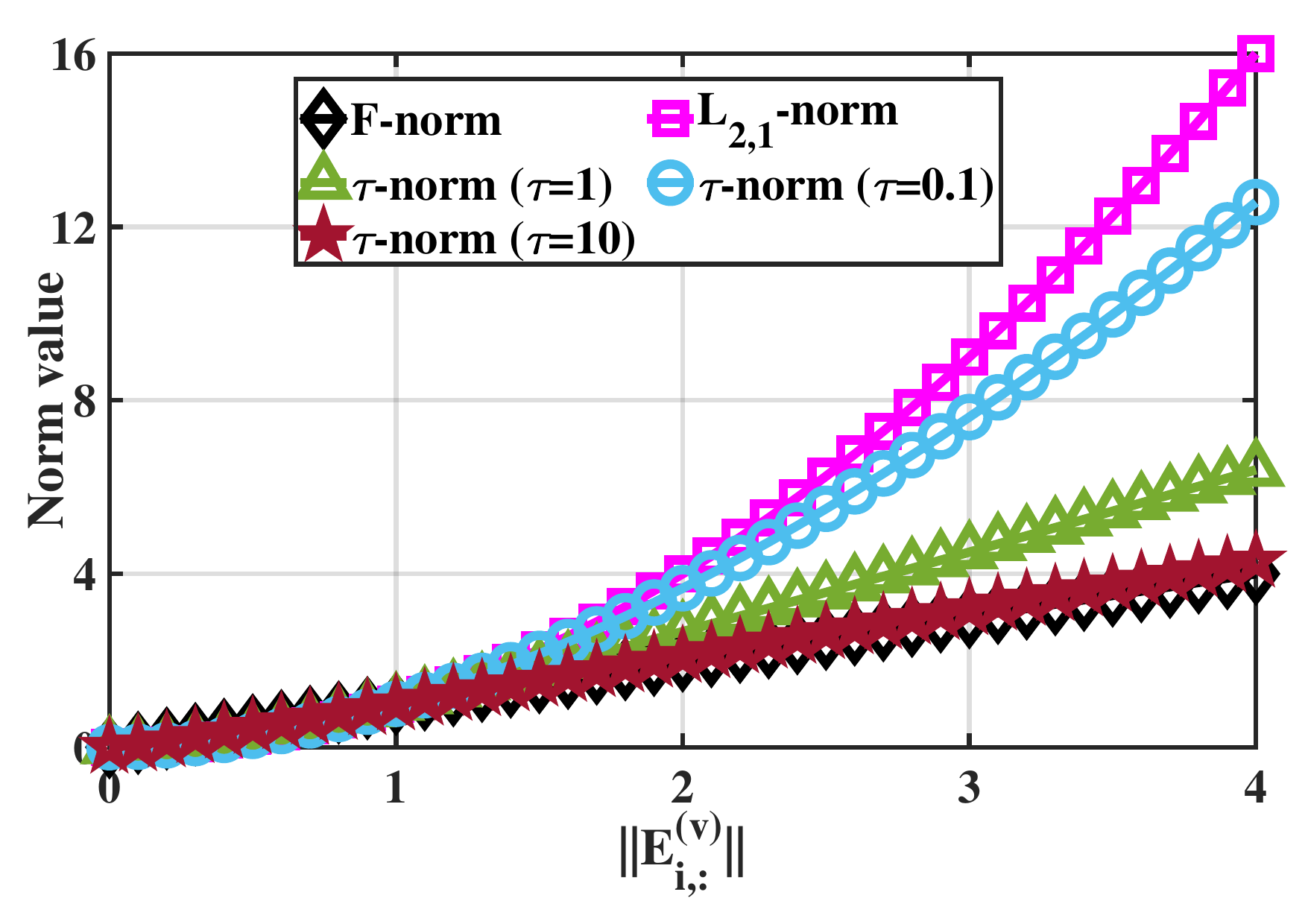}}
\caption{The performance of $\eta$-norm and $\tau$-norm.}
\label{fig:fanshu}
\end{figure*}
An ideal method is to design a linear kernel function that can learn nonlinear structures. Fortunately, the linear kernel used in the Laplacian for spectral clustering can learn the nonlinear structure of the data~\cite{kumar2011co}. Thus,
we choose the linear kernel function to measure the disagreement Dis($\bm{F}^{(v)},\bm{H}$).
Therefore, Dis($\bm{F}^{(v)},\bm{H}$) is defined as follows
\begin{align}\label{juli}
\text{Dis}(\bm{F}^{(v)},\bm{H})&=||\bm{F}^{(v)}\bm{F}^{(v)^T}-\bm{H}\bm{H}^T||_F^2,
\end{align}
where $\bm{F}^{(v)}\bm{F}^{(v)^T}$ is the linear kernel of $\bm{F}^{(v)}$ and $\bm{H}\bm{H}^T$ is the linear kernel of $\bm{H}$.
Therefore, we rewrite Eq.~\eqref{Npujulei} as
\begin{align}\label{Npujuleixin}
\min\limits_{\bm{F}^{(v)},\bm{H}}\sum_v(\text{Tr}(\bm{F}^{(v)^T}\bm{L}_N^{(v)}\bm{F}^{(v)})+\gamma||\bm{F}^{(v)}\bm{F}^{(v)^T}-\bm{H}\bm{H}^T||_F^2).
\end{align}
Combining Eq.~\eqref{zikongjian}, ~\eqref{Zfenjie}, and ~\eqref{Npujuleixin}, we can obtain
\begin{align}\label{xinzikongjian}
&\min\limits_{\bm{F}^{(v)}}\sum_v(||\bm{X}^{(v)}-\bm{X}^{(v)}\bm{Z}^{(v)}-\bm{E}^{(v)}||_F^2+\beta||\bm{E}^{(v)}||_{1}\nonumber\\
&+\alpha\text{Tr}(\bm{F}^{(v)^T}\bm{L}_N^{(v)}\bm{F}^{(v)})+\gamma||\bm{F}^{(v)}\bm{F}^{(v)^T}-\bm{H}\bm{H}^T||_F^2) \nonumber\\
\mbox{s.t. }&\bm{W}^{(v)^T}\bm{Z}^{(v)}\bm{W}^{(v)}=\bm{M}-p\bm{N}^{(v)},\bm{F}^{(v)^T}\bm{F}^{(v)}=\bm{I},\nonumber\\
&\bm{H}^T\bm{H}=\bm{I},\bm{N}^{(v)}\mathbf{1}=\mathbf{1},\bm{N}_{i,i}^{(v)}=0,i\in[n],
\end{align}
where the constraint $\bm{N}^{(v)}\mathbf{1}=\mathbf{1}$ treats all samples equally, which can learn the unique information of the $v$-th view; the constraint $\bm{N}_{i,i}^{(v)}=0$ can ensure that each sample can only be represented as the combination of other samples~\cite{vidal2011subspace}.
\subsection{View Heredity for Adjustable Low-rank Representation} \label{subsection:yichuan}
Most real-world multi-view data often have the low-rank subspace representations, which can be used to recover the underlying true data structure~\cite{liu2012robust,candes2011robust,He2007ARS}. Thus, the reasonable low-rank representation can improve the performance of subspace-based clustering.
Outlier Pursuit \cite{xu2010robust} is a popular technology to obtain the proper low-rank representation, which is formulated as
\begin{align}\label{Mhefanshu}
\min\limits_{\bm{M},\bm{E}^{(v)}}||\bm{M}||_{\ast}+\beta\sum_v||\bm{E}^{(v)}||_{2,1},
\end{align}
where $||\bm{M}||_{\ast}$ is used to approximate the rank of $\bm{M}$; $||\bm{E}^{(v)}||_{2,1}$ can learn the low-rank representation of $\bm{E}^{(v)}$.

In fact, on the one hand, the rank approximation of $\bm{M}$ by the nuclear function will lead to a large deviation, which may result in unusable clustering results.
On the other hand, $L_{2,1}$-norm is indifferentiable at the point of zero, which renders the derivate of $L_{2}$-norm at sparse rows senseless.
Besides, two cases fail Eq.~\eqref{Mhefanshu}: (i) If a dataset suffers from incompleteness or noises, its heredity matrix $\bm{M}$ is often unavailable. When the unavailable heredity matrix is used directly in Eq.~\eqref{Mhefanshu}, we will have difficulty obtaining the satisfactory low-rank representation because calculating the rank of the unavailable heredity matrix will produce a large deviation.
(ii) When handling different clustering tasks, we often need different $\bm{E}^{(v)}$ with different sparsity~\cite{5128907}. The adjustable sparsity of $\bm{E}^{(v)}$ is necessary.
Therefore, Eq.~\eqref{Mhefanshu} also has two drawbacks: unavailable heredity matrix and non-adjustable sparsity.

The nonconvex relaxation of matrix rank is a popular technique.
Evoked by \cite{doi:10.1080,doi:10.1198,392335}, to learn available $\bm{M}$,
we propose $\eta$-norm defined by
\begin{align}\label{sigema}
||\bm{M}||_{\eta}=\sum_i\frac{(\eta+w_i)\sigma_i(\bm{M})}{\eta+w_i\sigma_i(\bm{M})},
\end{align}
where $\eta$ is adjustable as needed ($\eta>0$); $w$ is the weight vector ($w_i>0$); $\sigma_i(\bm{M})$
is the $i$-th singular value of $\bm{M}$.

Note that different from the common norm (e.g., $L_{2,1}$-norm and $L_F$-norm, etc.), our proposed $\eta$-norm is not a real norm.
$\eta$-norm has the following characteristics:

1) $\eta$-norm is unitarily invariant, and $||\bm{M}||_{\eta}=||\bm{U}\bm{M}\bm{V}||_{w}$ for any orthonormal $\bm{U}\in \mathbb{R}^{m\times m}$ and $\bm{V}\in \mathbb{R}^{n\times n}$;

2) when $\eta\rightarrow\infty$, we have $||\bm{M}||_{\eta}\rightarrow||\bm{M}||_{w,\ast}$, where $||\cdot||_{w,\ast}$ is the weighted nuclear norm \cite{gu2014weighted};

3) when $\eta\rightarrow 0$, we have $||\bm{M}||_{\eta}\rightarrow$ rank($\bm{M}$).

To show the advantage of our proposed $\eta$-norm, we compare $\eta$-norm with several rank relaxation approaches (i.e., $\sigma_i/(1+\sigma_i)$ and $1-exp(-\sigma_i)$ in Fig.~\ref{fig:fanshuzuixin}). As shown in Fig.~\ref{fig:fanshuzuixin}, $\eta$-norm ($\eta$=0.001 in this figure) is closer to the true rank than other approaches. Thus, by learning the satisfactory low-rank representation, $\eta$-norm can ensure the availability of the heredity matrix.
Similar to Eq.~\eqref{sigema}, to learn an adjustable sparse representation, we adopt the $\tau$-norm of matrix $\bm{E}^{(v)}$ defined by
\begin{align}\label{tau}
||\bm{E}^{(v)}||_{\tau}=\sum_i\frac{(1+\tau)||\bm{E}_{i,:}^{(v)}||^2}{\tau+||\bm{E}_{i,:}^{(v)}||},
\end{align}
where $\tau$ is adjustable for different tasks. Based on matrix $\bm{E}^{(v)}$, we design a diagonal matrix $\bm{D}_E^{(v)}$ defined by
\begin{align}\label{ddingyi}
\bm{D}_{E_{i,i}}^{(v)}=\frac{(1+\tau)(||\bm{E}_{i}^{(v)}||+2\tau)}{(||\bm{E}_{i}^{(v)}||+\tau)^2}.
\end{align}
\begin{theorem}\label{sigemadingli}
For any matrix $\bm{E}^{(v)}$, we have
\begin{align}\label{piandingli}
\frac{\partial ||\bm{E}^{(v)}||_{\tau}}{\partial\bm{E}^{(v)}}=\bm{D}_E^{(v)}\bm{E}^{(v)}.
\end{align}
\end{theorem}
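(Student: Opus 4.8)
The plan is to compute the derivative of $||\bm{E}^{(v)}||_{\tau}$ entry by entry and then repackage the result as the matrix product $\bm{D}_E^{(v)}\bm{E}^{(v)}$. Write $\bm{E}=\bm{E}^{(v)}$ for brevity and recall that $||\bm{E}||_{\tau}=\sum_i g(r_i)$, where $r_i=||\bm{E}_{i,:}||=\bigl(\sum_k \bm{E}_{i,k}^2\bigr)^{1/2}$ is the Euclidean norm of the $i$-th row and $g(r)=\tfrac{(1+\tau)r^2}{\tau+r}$ is a scalar function of $r>0$. First I would differentiate $g$ with respect to its scalar argument: a routine quotient-rule computation gives $g'(r)=\dfrac{(1+\tau)\,r\,(r+2\tau)}{(r+\tau)^2}$. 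The reader should notice that $g'(r)/r$ is exactly the scalar $\dfrac{(1+\tau)(r+2\tau)}{(r+\tau)^2}$, which is precisely the definition of $\bm{D}_{E_{i,i}}^{(v)}$ in Eq.~\eqref{ddingyi} with $r=r_i=||\bm{E}_i^{(v)}||$.

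Next I would apply the chain rule to pass from $g(r_i)$ to the entries $\bm{E}_{i,k}$. Since only the $i$-th row enters $g(r_i)$, we have $\dfrac{\partial\, g(r_i)}{\partial \bm{E}_{i,k}}=g'(r_i)\,\dfrac{\partial r_i}{\partial \bm{E}_{i,k}}=g'(r_i)\,\dfrac{\bm{E}_{i,k}}{r_i}=\dfrac{g'(r_i)}{r_i}\,\bm{E}_{i,k}$, while $\dfrac{\partial\, g(r_j)}{\partial \bm{E}_{i,k}}=0$ for $j\neq i$. Summing over the terms of $||\bm{E}||_{\tau}$ therefore yields $\Bigl[\dfrac{\partial ||\bm{E}||_{\tau}}{\partial \bm{E}}\Bigr]_{i,k}=\dfrac{g'(r_i)}{r_i}\,\bm{E}_{i,k}=\bm{D}_{E_{i,i}}^{(v)}\,\bm{E}_{i,k}$. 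Recognizing the right-hand side as the $(i,k)$ entry of the product of the diagonal matrix $\bm{D}_E^{(v)}$ with $\bm{E}^{(v)}$ (left-multiplication by a diagonal matrix scales each row), we conclude $\dfrac{\partial ||\bm{E}^{(v)}||_{\tau}}{\partial \bm{E}^{(v)}}=\bm{D}_E^{(v)}\bm{E}^{(v)}$, which is Eq.~\eqref{piandingli}.

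The only delicate point, and the step I expect to require the most care, is the behavior at rows where $\bm{E}_{i,:}=\bm{0}$, i.e. $r_i=0$: there $r_i$ is not differentiable and the factor $g'(r_i)/r_i$ is a $0/0$ form. One handles this exactly as for the classical $L_{2,1}$-norm case: either one restricts attention to the points where all rows are nonzero (so that the gradient is classical), or one observes that $g(r)=(1+\tau)r^2/(\tau+r)$ is itself differentiable at $r=0$ with $g'(0)=0$, so that the zero row contributes nothing to the gradient, matching $\bm{D}_{E_{i,i}}^{(v)}\bm{E}_{i,:}=\bm{0}$ as well; in practice one adds a small $\varepsilon$ to $||\bm{E}_i^{(v)}||$ to keep $\bm{D}_E^{(v)}$ well defined. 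Apart from this edge case the argument is a direct chain-rule calculation, and I would present it compactly along the lines above.
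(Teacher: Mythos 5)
Your proof is correct and follows essentially the same route as the paper's: a row-wise quotient-/chain-rule computation of the derivative of $g(r_i)=\frac{(1+\tau)r_i^2}{\tau+r_i}$ with $r_i=||\bm{E}_{i,:}^{(v)}||$, whose scalar factor $g'(r_i)/r_i$ is recognized as the diagonal entry $\bm{D}_{E_{i,i}}^{(v)}$. Your write-up is in fact somewhat more careful than the paper's three-line calculation — you explicitly treat the zero-row case (which underpins the paper's claim that the $\tau$-norm is globally differentiable), and your derivation implicitly corrects a typo in the first line of the paper's proof, where the denominator is written as $1+\tau||\bm{E}_{i,:}^{(v)}||_2$ instead of $\tau+||\bm{E}_{i,:}^{(v)}||_2$.
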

\begin{proof}\label{zhengpiandingli}
\begin{align}\label{piandinglizheng}
\frac{\partial ||\bm{E}^{(v)}||_{\tau}}{\partial \bm{E}_{i,:}^{(v)}}&=\frac{\partial((1+\tau)||\bm{E}_{i,:}^{(v)}||_2^2/(1+\tau||\bm{E}_{i,:}^{(v)}||_2))}{\partial \bm{E}_{i,:}^{(v)}}\nonumber\\
&=\frac{(1+\tau)(||\bm{E}_{i}^{(v)}||+2\tau)}{(||\bm{E}_{i}^{(v)}||+\tau)^2}\bm{E}_{i,:}^{(v)}\nonumber\\
\bm{D}_{i,i}^{(v)}\bm{E}_{i,:}^{(v)}&=\frac{(1+\tau)(||\bm{E}_{i}^{(v)}||+2\tau)}{(||\bm{E}_{i}^{(v)}||+\tau)^2}\bm{E}_{i,:}^{(v)}.
\end{align}
Obviously, $\partial ||\bm{E}^{(v)}||_{\tau}/{\partial \bm{E}_{i,:}^{(v)}}=\bm{D}_{i,i}^{(v)}\bm{E}_{i,:}^{(v)}$ in all the cases.
\end{proof}
Similar to $\eta$-norm, $\tau$-norm has the following characteristics:

1) $\tau$-norm is nonnegative and global differentiable;

%

2) when $\tau\rightarrow\infty$, we have $||\bm{E}^{(v)}||_{\tau}\rightarrow||\bm{E}^{(v)}||_F^2$ and $\bm{D}_E^{(v)}\rightarrow\bm{I}$;

3) when $\tau\rightarrow 0$, we have $||\bm{E}^{(v)}||_{\tau}\rightarrow||\bm{E}^{(v)}||_{2,1}$ and $\bm{D}_{E_{i,i}}^{(v)}\rightarrow 1/||\bm{E}_{i,:}^{(v)}||_2$.

To verify the adjustability of $\tau$-norm, we compare $\tau$-norm with $L_{2,1}$-norm and F-norm in Fig.~\ref{fig:taofanshu}. When $\tau$ is relatively small ($\tau=0.1$), $\tau$-norm is near to F-norm.
As $\tau$ decreases, $||\bm{E}^{(v)}||_{\tau}$ is closer to $||\bm{E}^{(v)}||_{2,1}$, and $\bm{E}^{(v)}$ becomes more sparse. Since we can choose different $\tau$ to adjust the sparsity, $\tau$-norm has wider applications than $L_{2,1}$-norm and F-norm.

Therefore, considering both the available low-rank subspaces (Eq.~\eqref{sigema}) and the adjustable sparsity (Eq.~\eqref{tau}),
we can obtain the adjustable low-rank representation as follows
\begin{align}\label{Mhefanshuxin}
\min\limits_{\bm{M},\bm{E}^{(v)}}||\bm{M}||_{\eta}+\beta\sum_v||\bm{E}^{(v)}||_{\tau}.
\end{align}
\subsection{Objective Function}
Combining the view alignment (Eq.~\eqref{xinzikongjian}) and the adjustable low-rank representation (Eq.~\eqref{Mhefanshuxin}), we have
\begin{align}\label{zikongjiangai2}
&\min\limits_{\bm{M},\bm{N}^{(v)},\bm{Z}^{(v)},\bm{E}^{(v)},\bm{F}^{(v)},\bm{F}^{\ast}}||\bm{M}||_{\eta}+\sum_v(\beta||\bm{E}^{(v)}||_{\tau}\nonumber\\
&+\alpha\text{Tr}(\bm{F}^{(v)^T}\bm{L}_N^{(v)}\bm{F}^{(v)})+\gamma ||\bm{F}^{(v)}\bm{F}^{(v)^T}-\bm{H}\bm{H}^T||_F^2)\\
&\mbox{s.t. }\bm{F}^{(v)^T}\bm{F}^{(v)}=\bm{I},\bm{H}^T\bm{H}=\bm{I},\bm{N}^{(v)}\mathbf{1}=\mathbf{1},\bm{N}_{i,i}^{(v)}=0,\nonumber\\
&\bm{W}^{(v)^T}\bm{Z}^{(v)}\bm{W}^{(v)}=\bm{M}-p\bm{N}^{(v)},\bm{X}^{(v)}=\bm{X}^{(v)}\bm{Z}^{(v)}-\bm{E}^{(v)}.\nonumber
\end{align}
Eq.~\eqref{zikongjiangai2} is a nonconvex function, which is often difficult to optimize directly. In the next section, we will design an iteration procedure to optimize it.
\subsection{Optimization}\label{subsection:optimization}
To optimize Eq.~\eqref{zikongjiangai2}, we design the following augmented Lagrangian function
\begin{align}\label{lagelangrichu}
&J=||\bm{M}||_{\eta}+\sum_v(\alpha\text{Tr}(\bm{F}^{(v)^T}\bm{L}_N^{(v)}\bm{F}^{(v)})+\beta||\bm{E}^{(v)}||_{\tau}\nonumber\\
&+\gamma ||\bm{F}^{(v)}\bm{F}^{(v)^T}-\bm{H}\bm{H}^{T}||_F^2+\frac{\omega}{2}(||\bm{W}^{(v)^T}\bm{Z}^{(v)}\bm{W}^{(v)}\nonumber\\
&-\bm{M}+p\bm{N}^{(v)}+\frac{\bm{C}_2^{(v)}}{\omega}||_F^2+||\bm{X}^{(v)}-\bm{X}^{(v)}\bm{Z}^{(v)}-\bm{E}^{(v)}\nonumber\\
&+\frac{\bm{C}_1^{(v)}}{\omega}||_F^2-\zeta^{(v)^T}(\bm{N}^{(v)}\bm{1}-\bm{1}))),
\end{align}
where matrices $\bm{C}_1^{(v)}$, $\bm{C}_2^{(v)}$ and vector $\zeta^{(v)}$ are Lagrange multipliers, $\omega$ is a nonnegative penalty parameter.
Eq.~\eqref{lagelangrichu} is not convex for all variables simultaneously, and it is difficult to solve Eq.~\eqref{lagelangrichu} in one step. Thus, we design the following seven-step procedure to update each variable iteratively~\cite{kang2015robust}.
\begin{algorithm}[t]
    \caption{V$^3$H}
    \label{simc_alg}
	\begin{algorithmic}
	 \REQUIRE{$\{\bm{X}^{(v)}\}_{v=1}^{n_v}$, $\{\bm{W}^{(v)}\}_{v=1}^{n_v}$, $\alpha,\beta,\gamma$, $\lambda_{\max}$, and $c$.}
    \STATE{Initialize $\bm{E}^{(v)}=\bm{Z}^{(v)}=\bm{M}=\bm{N}^{(v)}=\bm{0}$, $\bm{C}_1^{(v)}=\bm{C}_2^{(v)}=\bm{0}$, $\bm{F}^{(v)}$, $\bm{F}^{\ast}$, and $\omega$.}
	\REPEAT
	\FOR{$v=1$ to $n_v$}
	\STATE{Update $\bm{\bm{Z}}^{(v)}$ by Eq.~\eqref{zjie};}
	\STATE{Update $\bm{N}^{(v)}$ and $\zeta$ based on Eq.~\eqref{nzuizhongjie}, Eq.~\eqref{zeta}, and the constraint $\bm{N}_{i,:}^{(v)}\bm{1}-1=0$;}
	\STATE{Update $\bm{E}^{(v)}$ by Eq.~\eqref{ejie};}
	\STATE{Update $\bm{F}^{(v)}$ by Eq.~\eqref{jf};}
    \STATE{Update $\bm{C}_1^{(v)},\bm{C}_2^{(v)}$, and $\omega$ by Eq.~\eqref{jc};}
	\ENDFOR
	\STATE{Update $\bm{M}$ by Eq. \eqref{mjie} and DC programming;}
	\STATE{Update $\bm{H}$ by Eq.~\eqref{jfxing};}
	\UNTIL{converges}
    \ENSURE{$\bm{F}^{(v)}$, $\bm{H}$, $\bm{M}$, $\bm{N}^{(v)}$ and clustering results.}
	\end{algorithmic}
\end{algorithm}

\noindent\textbf{Step 1.} Update $\bm{M}$. Fixing the other variables, the problem to update $\bm{M}$ is degraded to solve the following problem
\begin{align}\label{mjie}
&\bm{M}=\arg\min\limits_{\bm{M}}\frac{\omega}{2}||\bm{W}^{(v)^T}\bm{Z}^{(v)}\bm{W}^{(v)}-\bm{M}+p\bm{N}^{(v)}\nonumber\\
&+\frac{\bm{C}_2^{(v)}}{\omega}||_F^2+||\bm{M}||_{\eta}.
\end{align}
To solve Eq.~\eqref{mjie}, we first develop the following theorem.
\begin{theorem}\label{mfanshudingli}
We first set $\bm{A}=\bm{W}^{(v)^T}\bm{Z}^{(v)}\bm{W}^{(v)}+p\bm{N}^{(v)}+\bm{C}_2^{(v)}/{\omega}$. The SVD operation of $\bm{A}$ is $\bm{A}=\bm{U}\bm{\Sigma}_A\bm{V}^T$, where $\bm{\Sigma}_A=diag(\sigma_A)$. Set $H(\bm{M})=h\circ\sigma_M$
be a unitarily invariant function, where $h(\sigma)=\Sigma_i(\eta+w_i)\sigma_i(\bm{M})/(\eta+w_i\sigma_i(\bm{M}))$ and $\omega>0$. Based on Eq.~\eqref{mjie}, we have
\begin{align}\label{hm}
\min\limits_{\bm{M}}H(\bm{M})+\frac{\omega}{2}||\bm{M}-\bm{A}||_F^2.
\end{align}
Then an optimal solution to Eq.~\eqref{hm} is $\bm{M}^{\ast}=\bm{U}\bm{\Sigma}_M^{\ast}\bm{V}^T$, where $\bm{\Sigma}_M^{\ast}=diag(\sigma^{\ast})$ and $\sigma^{\ast}=prox_{h,\omega}(\sigma_A)$. $prox_{h,\omega}(\sigma_A)$ is the \emph{Moreau-Yosida operator}, which is defined as
\begin{align}\label{prox}
prox_{h,\omega}(\sigma_A)=\arg\min h(\sigma)+\frac{\omega}{2}||\sigma-\sigma_A||_2^2.
\end{align}
\end{theorem}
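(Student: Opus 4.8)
The plan is to reduce the matrix problem in Eq.~\eqref{hm} to the scalar proximal problem in Eq.~\eqref{prox} by exploiting that $H(\bm{M})=h\circ\sigma_M$ depends on $\bm{M}$ only through its singular values, and then to use von Neumann's trace inequality to pin down the singular vectors of the minimiser. First I would expand the quadratic term as
\begin{align}\label{propexpand}
||\bm{M}-\bm{A}||_F^2=\sum_i\sigma_i(\bm{M})^2-2\langle\bm{M},\bm{A}\rangle+\sum_i\sigma_i(\bm{A})^2,
\end{align}
where $\langle\bm{M},\bm{A}\rangle=\text{Tr}(\bm{M}^T\bm{A})$, so that the objective of Eq.~\eqref{hm} becomes $h(\sigma_M)+\frac{\omega}{2}(\sum_i\sigma_i(\bm{M})^2+\sum_i\sigma_i(\bm{A})^2)-\omega\langle\bm{M},\bm{A}\rangle$. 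Every term here except $-\omega\langle\bm{M},\bm{A}\rangle$ is a function of $\sigma_M$ alone; hence for a fixed singular-value vector $\sigma_M$, minimising over $\bm{M}$ is equivalent to maximising $\langle\bm{M},\bm{A}\rangle$ subject to $\bm{M}$ having those prescribed singular values.

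The next step is to invoke von Neumann's trace inequality, $\langle\bm{M},\bm{A}\rangle\le\sum_i\sigma_i^{\downarrow}(\bm{M})\,\sigma_i^{\downarrow}(\bm{A})$, whose equality case is attained exactly when $\bm{M}$ and $\bm{A}$ admit a simultaneous SVD, i.e. $\bm{M}=\bm{U}\bm{\Sigma}_M\bm{V}^T$ with the very $\bm{U},\bm{V}$ that diagonalise $\bm{A}$. Restricting to such $\bm{M}$, which is without loss of optimality since this configuration maximises the coupling term, and using that the weights $w_i$ are taken in non-descending order while $\sigma_A$ is in non-ascending order so that this index pairing is simultaneously optimal for the $h$-term, Eq.~\eqref{hm} collapses to the vector problem
\begin{align}\label{propvec}
\min\limits_{\sigma}\ h(\sigma)+\frac{\omega}{2}||\sigma-\sigma_A||_2^2,
\end{align}
over nonnegative $\sigma$, which is precisely $prox_{h,\omega}(\sigma_A)$ of Eq.~\eqref{prox}. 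Re-assembling yields $\bm{M}^{\ast}=\bm{U}\bm{\Sigma}_M^{\ast}\bm{V}^T$ with $\bm{\Sigma}_M^{\ast}=diag(prox_{h,\omega}(\sigma_A))$ as a global minimiser; existence follows from coercivity of the objective (dominated by the Frobenius term) together with lower semicontinuity of $\sigma\mapsto h(\sigma)$.

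The main obstacle I expect is the ordering/rearrangement step: because $h$ is a weighted, generally non-symmetric, and nonconvex function of the singular values, one must verify that equality in von Neumann's inequality and optimality of the index pairing for the $h$-term can be realised simultaneously. I would handle this by first reducing to $\bm{M}$ sharing $\bm{A}$'s singular subspaces (legitimate by the equality analysis), then showing by a pairwise-swap rearrangement argument applied to $h(\sigma)+\frac{\omega}{2}||\sigma-\sigma_A||_2^2$ that no permutation of the entries of $\sigma$ relative to the fixed weight ordering can lower the value --- this is exactly where the assumption that $w$ is sorted opposite to $\sigma_A$ is used, paralleling the weighted-nuclear-norm analysis of~\cite{gu2014weighted}. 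The residual scalar minimisation defining $prox_{h,\omega}$ is nonconvex but separable across coordinates, and is treated by the DC-programming routine already invoked for updating $\bm{M}$ in Algorithm~\ref{simc_alg}.
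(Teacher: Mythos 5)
Your proposal is correct and arrives at the same reduction as the paper --- from the matrix problem \eqref{hm} to the vector proximal problem \eqref{prox} solved in the singular-value domain --- but via a different key lemma. The paper first rotates $\bm{M}$ to $\bm{Q}=\bm{U}^T\bm{M}\bm{V}$ using unitary invariance of the Frobenius norm and then lower-bounds $||\bm{Q}-\bm{\Sigma}_A||_F^2$ by $||\bm{\Sigma}_Q-\bm{\Sigma}_A||_F^2$ via the Hoffman--Wielandt inequality, checking attainment afterwards by plugging in $\bm{M}^{\ast}=\bm{U}\,diag(\sigma^{\ast})\,\bm{V}^T$. You instead expand the quadratic and isolate the coupling term $\langle\bm{M},\bm{A}\rangle$, which you bound by von Neumann's trace inequality; the two inequalities are equivalent in this context (von Neumann is the standard route to proving the singular-value Hoffman--Wielandt bound), so the computations match step for step. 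What your route buys is that the equality case of von Neumann's inequality directly characterises the minimiser as sharing $\bm{A}$'s singular subspaces, rather than merely exhibiting a matrix that attains the lower bound. You also flag, and propose to close, a gap the paper's proof silently skips: since $h(\sigma)=\sum_i(\eta+w_i)\sigma_i/(\eta+w_i\sigma_i)$ is not symmetric in its arguments for non-constant weights, the identification $H(\bm{Q})=H(\bm{\Sigma}_Q)$ and the optimality of pairing $\sigma_i(\bm{M})$ with $\sigma_i(\bm{A})$ in sorted order require a rearrangement argument tying the ordering of $w$ to that of $\sigma_A$, exactly as in the weighted-nuclear-norm analysis. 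This is immaterial in the paper's experiments (where $w_i=1$) but is needed for the theorem as stated; your pairwise-swap argument is the right way to supply it.
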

\begin{proof}\label{mfanshuzhengming}
Since $\bm{A}=\bm{U}\bm{\Sigma}_A\bm{V}^T$, we have $\bm{\Sigma}_A=\bm{U}^T\bm{A}\bm{V}$. Denoting $\bm{Q}=\bm{U}^T\bm{M}\bm{V}$ which has the same singular values as $\bm{M}$, we have
\begin{subequations}
\begin{align}
&H(\bm{M})+\frac{\omega}{2}||\bm{M}-\bm{A}||_F^2\label{Z0}\\
=&H(\bm{Q})+\frac{\omega}{2}||\bm{Q}-\bm{\Sigma}_A||_F^2\label{ZA}\\
\geq&H(\bm{\Sigma}_Q)+\frac{\omega}{2}||\bm{\Sigma}_Q-\bm{\Sigma}_A||_F^2\label{ZB}\\
=&H(\bm{\Sigma}_M)+\frac{\omega}{2}||\bm{\Sigma}_M-\bm{\Sigma}_A||_F^2 \label{ZC}\\
=&h(\sigma)+\frac{\omega}{2}||\sigma-\sigma_A||_2^2\label{ZD}\\
\geq&h(\sigma^{\ast})+\frac{\omega}{2}||\sigma^{\ast}-\sigma_A||_2^2.\nonumber
\end{align}
\end{subequations}
Eq.~\eqref{ZA} holds because the Frobenius norm is unitarily invariant, Eq.~\eqref{ZB} holds due to the Hoffman-Wielandt inequality, and Eq.~\eqref{ZC} holds based on $\bm{\Sigma}_Q=\bm{\Sigma}_M$. Therefore, Eq.~\eqref{ZC} is a lower bound
of Eq.~\eqref{Z0}. Due to $\bm{\Sigma}_M=\bm{\Sigma}_Q=\bm{Q}=\bm{U}^T\bm{M}\bm{V}$, the singular value decomposition (SVD) of $\bm{M}$ is $\bm{M}=\bm{U}^T\bm{\Sigma}_M\bm{V}$. By minimizing Eq.~\eqref{ZD}, we learn $\sigma^{\ast}$. Hence $\bm{M}^{\ast}=\bm{U}diag(\sigma^{\ast})\bm{V}^T$, which is the optimal solution of Eq.~\eqref{hm}. Thus, Theorem~\ref{mfanshudingli} is proved.
\end{proof}
Note that Eq.~\eqref{prox} is a combination of concave and convex functions, which motivates us to leverage the difference of convex (DC) programming algorithm \cite{tao1997convex}. The algorithm decomposes a nonconvex function as the difference of two convex functions and iteratively optimizes it by linearizing the concave term at each iteration. For the $i$-th inner iteration,
$g_i=\partial h(\sigma^{i})$ denotes the gradient of $h(\cdot)$ at $\sigma^{i}$, where $\sigma^{i}$ is the updated value of $\sigma(\bm{M})$ after the $i$-th inner iteration. $\bm{U}diag(\sigma_A)\bm{V}^T$ is the SVD of $\bm{W}^{(v)^T}\bm{Z}^{(v)}\bm{W}^{(v)}+p\bm{N}^{(v)}+\bm{C}_2^{(v)}/{\omega}$. For the $(i+1)$-th inner iteration, we have
\begin{align}\label{sigmakding}
\sigma^{i+1}=\arg\min <g_i,\sigma^i>+\frac{\omega}{2}||\sigma^i-\sigma_A||_2^2,
\end{align}
which admits the following closed-form solution
\begin{align}\label{sigmakjieguo}
\sigma^{i+1}=(\sigma_A-\frac{g_i}{\omega^i}).
\end{align}
After several iterations, it at least converges to a locally optimal point $\sigma^{\ast}$. Then $\bm{M}=\bm{U}diag(\sigma^{\ast})\bm{V}^T$.

\noindent\textbf{Step 2.} Update $\bm{Z}^{(v)}$. Fixing the other variables,
the problem to update $\bm{Z}^{(v)}$ is degraded to minimize
\begin{align}\label{jz}
&J(\bm{Z}^{(v)})=||\bm{W}^{(v)^T}\bm{Z}\bm{W}^{(v)}-\bm{M}+p\bm{N}^{(v)}+\frac{\bm{C}_2^{(v)}}{\omega}||_F^2\nonumber\\
&+||\bm{X}^{(v)}-\bm{X}^{(v)}\bm{Z}^{(v)}-\bm{E}^{(v)}+\frac{\bm{C}_1^{(v)}}{\omega}||_F^2.
\end{align}
Setting the derivative $J(\bm{Z}^{(v)})$ w.r.t $\bm{Z}^{(v)}$ to $\bm{0}$, we have
\begin{align}\label{jzpian}
&2\bm{W}^{(v)}(\bm{W}^{(v)^T}\bm{Z}\bm{W}^{(v)}-\bm{M}+p\bm{N}^{(v)}+\frac{\bm{C}_2^{(v)}}{\omega})\bm{W}^{(v)^T}\nonumber\\
&+2\bm{X}^{(v)^T}(\bm{X}^{(v)}-\bm{X}^{(v)}\bm{Z}^{(v)}-\bm{E}^{(v)}+\frac{\bm{C}_1^{(v)}}{\omega})=\bm{0}.
\end{align}
Based on the definition of $\bm{W}^{(v)}$, we can find $\bm{W}^{(v)}\bm{W}^{(v)^T}=\bm{I}$. By solving Eq.~\eqref{jzpian}, we can update $\bm{Z}^{(v)}$ by
\begin{align}\label{zjie}
&\bm{Z}^{(v)}=(\bm{I}+\bm{X}^{(v)^T}\bm{X}^{(v)})^{-1}(\bm{X}^{(v)^T}(\bm{X}^{(v)}-\bm{E}^{(v)}\\
&+\frac{\bm{C}_1^{(v)}}{\omega})+\bm{W}^{(v)}(\bm{M}-p\bm{N}^{(v)}-\frac{\bm{C}_2^{(v)}}{\omega})\bm{W}^{(v)^T}).\nonumber
\end{align}
\noindent\textbf{Step 3.} Update $\bm{N}^{(v)}$ and $\zeta$. Fixing the other variables, the problem to update $\bm{N}^{(v)}$ is degraded to minimize
\begin{align}\label{jn}
&J(\bm{N}^{(v)})=\frac{\omega}{2}||\bm{W}^{(v)^T}\bm{Z}^{(v)}\bm{W}^{(v)}-\bm{M}+p\bm{N}^{(v)}+\frac{\bm{C}_2^{(v)}}{\omega}||_F^2\nonumber\\
&+\alpha\text{Tr}(\bm{F}^{(v)^T}\bm{L}_N^{(v)}\bm{F}^{(v)})-\zeta^{(v)^T}(\bm{N}^{(v)}\bm{1}-\bm{1}).
\end{align}
We define $\bm{K}^{(v)}=\bm{W}^{(v)^T}\bm{Z}^{(v)}\bm{W}^{(v)}-\bm{M}+\bm{C}_2^{(v)}/{\omega}$, and Eq.~\eqref{jn} can be equivalent to
\begin{align}\label{jnjianhua}
&J(\bm{N}^{(v)})=\frac{\omega p}{2}||\bm{N}^{(v)}-\frac{1}{p}\bm{K}^{(v)}||_F^2+\frac{\alpha}{2}\sum_{i,j}^m||\bm{F}_{i,:}^{(v)}\nonumber\\ &-\bm{F}_{j,:}^{(v)}||_2^2\bm{N}^{(v)}-\zeta^{(v)^T}(\bm{N}^{(v)}\bm{1}-\bm{1}).
\end{align}
Note that Eq.~\eqref{jnjianhua} is independent to each row. Defining $\bm{T}_{i,j}^{(v)}$=$||\bm{F}_{i,:}^{(v)}-\bm{F}_{j,:}^{(v)}||_2^2$, we transform minimizing Eq.~\eqref{jnjianhua} into
\begin{align}\label{njie}
\min\limits_{\bm{N}_{i,:}^{(v)}\geq 0,\bm{N}_{i,i}^{(v)}=0}&\frac{\omega p}{2}||\bm{N}_{i,:}^{(v)}-\frac{1}{p}\bm{K}_{i,:}^{(v)}||_2^2+\frac{\alpha}{2}\bm{N}_{i,:}^{(v)}\bm{T}_{i,:}^{(v)^T}\nonumber\\
&-\zeta_i^{(v)^T}(\bm{N}_{i,:}^{(v)}\bm{1}-1)\nonumber\\
\Leftrightarrow\min\limits_{\bm{N}_{i,:}^{(v)}\geq 0,\bm{N}_{i,i}^{(v)}=0}&\frac{\omega p}{2}||\bm{N}_{i,:}^{(v)}+\frac{\alpha}{2\omega p}\bm{T}_{i,:}^{(v)}-\frac{1}{p}\bm{K}_{i,:}^{(v)}||_2^2\nonumber\\
&-\zeta_i^{(v)^T}(\bm{N}_{i,:}^{(v)}\bm{1}-1).
\end{align}
Defining $\bm{Y}_{i,j}^{(v)}=\bm{K}_{i,j}^{(v)}/p-\alpha/(2p\omega)\bm{T}_{i,j}^{(v)}+{\zeta_i^{(v)}}/(p\omega)$, we can obtain the optimal $\bm{N}^{(v)}$ by
\begin{align}\label{nzuizhongjie}
  \bm{N}_{i,j}^{(v)} \!=\! \left\{ \begin{array}{ll}
                     \max(\bm{Y}_{i,j}^{(v)},0), & i\neq j; \\
                     0, & \textrm{otherwise,}
                   \end{array}
  \right.
\end{align}
where $\max(\bm{Y}_{i,j}^{(v)},0)$ is used to ensure that all elements of $\bm{N}^{(v)}$ are not less than 0. Based on the constraint $\bm{N}_{i,:}^{(v)}\bm{1}-1=0$, we can update $\zeta_i^{(v)}$ by
\begin{align}\label{zeta}
\zeta_i^{(v)}=\frac{\omega p}{m-1}(1-\sum_{j=1,j\neq i}^m(\frac{1}{p}\bm{K}_{i,j}^{(v)}-\frac{\alpha}{2p\omega}\bm{T}_{i,j}^{(v)})).
\end{align}
\noindent\textbf{Step 4.} Update $\bm{E}^{(v)}$. Fixing the other variables, the problem to update $\bm{E}^{(v)}$ is degraded to minimize
\begin{align}\label{je}
J(\bm{E}^{(v)})=&\frac{\omega}{2}||\bm{X}^{(v)}-\bm{X}^{(v)}\bm{Z}^{(v)}-\bm{E}^{(v)}+\frac{\bm{C}_1^{(v)}}{\omega}||_F^2+\beta||\bm{E}^{(v)}||_{\tau}.
\end{align}
Deriving $J(\bm{E}^{(v)})$ w.r.t. $\bm{E}^{(v)}$, we can have
\begin{align}\label{epian}
\frac{\partial J(\bm{E}^{(v)})}{\partial\bm{E}^{(v)}}=&\omega(\bm{E}^{(v)}+\bm{X}^{(v)}\bm{Z}^{(v)}-\bm{X}^{(v)}-\frac{\bm{C}_1^{(v)}}{\omega})+\beta\bm{D}^{(v)}_E\bm{E}^{(v)}.
\end{align}
Setting $\partial J(\bm{E}^{(v)})/\partial\bm{E}^{(v)}=0$, we can update $\bm{E}^{(v)}$ by
\begin{align}\label{ejie}
\bm{E}^{(v)}=(\bm{I}+\frac{\beta}{\omega}\bm{D}_E^{(v)})^{-1}(\bm{X}^{(v)}-\bm{X}^{(v)}\bm{Z}^{(v)}+\frac{\bm{C}_1^{(v)}}{\omega}).
\end{align}
\noindent\textbf{Step 5.} Update $\bm{F}^{(v)}$. Fixing the other variables, we can update $\bm{F}^{(v)}$ by
\begin{align}\label{jyuangai}
&\min\alpha\text{Tr}(\bm{F}^{(v)^T}\bm{L}_N^{(v)}\bm{F}^{(v)})+\gamma||\bm{F}^{(v)}\bm{F}^{(v)^T}-\bm{H}\bm{H}^T||_F^2)\nonumber\\
\Leftrightarrow&\min\alpha\text{Tr}(\bm{F}^{(v)^T}\bm{L}_N^{(v)}\bm{F}^{(v)}+\gamma(||\bm{F}^{(v)}\bm{F}^{(v)^T}||_F^2\nonumber\\
&+||\bm{H}\bm{H}^T||_F^2-\text{Tr}(\bm{F}^{(v)}\bm{F}^{(v)^T}\bm{H}\bm{H}^T)).
\end{align}
Note that $||\bm{F}^{(v)}\bm{F}^{(v)^T}||_F^2=||\bm{H}\bm{H}^T||_F^2=c$, and $c$ is a constant for a specific dataset. Thus, we transfer Eq.~\eqref{jyuangai} into
\begin{align}\label{jf}
&\min\text{Tr}(\bm{F}^{(v)^T}(\alpha\bm{L}_N^{(v)}-\gamma\bm{H}\bm{H}^T)\bm{F}^{(v)}).
\end{align}
Obviously, we can solve Eq.~\eqref{jf} by the eigenvalue decomposition. The optimal $\bm{F}^{(v)}$ is the eigenvector set that corresponds to the first $c$ smallest eigenvalues of matrix $(\alpha\bm{L}_N^{(v)}-\gamma\bm{H}\bm{H}^T)$.

\noindent\textbf{Step 6.} Update $\bm{H}$. Similar to Step 5,
we update $\bm{H}$ by
\begin{align}\label{jfxing}
&\min-\gamma\sum_v\text{Tr}(\bm{F}^{(v)}\bm{F}^{(v)^T}\bm{H}\bm{H}^T)\nonumber\\
\Leftrightarrow&\max\text{Tr}(\bm{H}^T(\sum_v(\gamma\bm{F}^{(v)}\bm{F}^{(v)^T}))\bm{H}).
\end{align}
By the eigenvalue decomposition, we can learn the optimal $\bm{H}$, which is also the eigenvector set corresponding to the first $c$ largest eigenvalues of matrix $(\sum_v(\gamma\bm{F}^{(v)}\bm{F}^{(v)^T}))$.

\noindent\textbf{Step 7.} Update $\bm{C}_1^{(v)},\bm{C}_2^{(v)}$ and $\omega$. We can update them by
\begin{align}\label{jc}
\bm{C}_1^{(v)}&=\bm{C}_1^{(v)}+\omega(\bm{X}^{(v)}-\bm{X}^{(v)}\bm{Z}^{(v)}-\bm{E}^{(v)})\nonumber\\
\bm{C}_2^{(v)}&=\bm{C}_2^{(v)}+\omega(\bm{W}^{(v)^T}\bm{Z}\bm{W}^{(v)}-\bm{M}+p\bm{N}^{(v)})\nonumber\\
\omega&=\min(\varphi\omega,\omega_{max}),
\end{align}
where $\varphi$ and $\omega_{max}$ are constants.

The V$^3$H algorithm is shown in Algorithm~\ref{simc_alg}. We provide its codes in Code Ocean (DOI:10.24433/CO.2119636.v1) and Github (\url{https://github.com/ZeusDavide/TAI_V3H.git}).
\subsection{Convergence and Complexity}\label{subsection:fuzadu}
\subsubsection{Convergence Analysis}\label{subsubsection:shoulianfenxi}
To optimize our proposed V$^3$H, we need to solve seven subproblems in Algorithm \ref{simc_alg}. Each subproblem has a closed solution w.r.t the corresponding variable.
The objective function is bounded, and all the above seven steps do not increase the objective function value. Thus, the objective function can reduce monotonically to a stationary value, and V$^3$H can at least find a locally optimal solution.
\subsubsection{Complexity Analysis}\label{subsubsection:fuzadufenxi}
From Section \ref{subsection:optimization}, the major computational costs of our proposed V$^3$H mainly come from the operations like matrix inverse, SVD, and eigenvalue decomposition. Therefore,
Steps 1, 5, and 6 are the main computational costs. For Step 2, its major computational costs are from the inverse operation $(\bm{I}+\bm{X}^{(v)^T}\bm{X}^{(v)})^{-1}$. Since both $\bm{I}$ and $\bm{X}^{(v)}$ are not updated in each iteration, we can pre-compute the inverse operation before the iteration for simplicity.
For Steps 1, 5, and 6, they have the same computational complexity $O(n^3)$.
Therefore, the whole computational complexity of V$^3$H is about $O(in_vn^3)$, where $i$ is the iteration number, $n_v$ is the view number, and $n$ is the sample number.

Note that the complexity of V$^3$H has nothing to do with the feature dimension $d_v$. Since most real-world data are high-dimensional \cite{6809191}, V$^3$H will have wide applications.

\section{Performance Evaluation}\label{section:exp}
We first illustrate the clustering performance of the proposed V$^3$H, then verify V$^3$H's convergence, and finally analyze the sensitivity of V$^3$H's parameters.
\subsection{Datasets} \label{section:data}
\begin{table}[t]
\centering
\caption{Statistics of the datasets, where ``\#. features'' means the total feature dimension of all the views.}
\begin{tabular}{lccccccccccccccccccccc}
\hline
Dataset  & 3-Sources & 20-NGs  & 100-Ls & BBC (3v)& BBC (4v) \\\hline
\#. samples & 169  & 500    & 1600 & 282 &685 \\
\#. views    &3  & 3    & 3  &3 &4\\
\#. features    &10259  & 1500    & 192  &7591 &18641\\
\#. clusters & 6  & 5  & 100&5  &5 \\
\hline
Dataset &BS (2v) &BS (4v)& BUAA & Coil& Digit \\\hline
\#. samples &544&116 &180&1440 & 2000  \\
\#. views  &2&4&2 &3   &5\\
\#. features    &6386&8345&200&11078   &585\\
\#. clusters &5&5&3 &20& 10 \\
\hline
Dataset   & NUS & ORL &Scene & Yale &YaleB \\\hline
\#. samples & 2400    & 400 & 2688&165&650 \\
\#. views  & 6    & 3  &4&3&3\\
\#. features     &1134    & 14150 &1248&14150&12554\\
\#. clusters & 12  & 40&8&15&10\\
\hline
\end{tabular}
\label{dataset}
\end{table}
\begin{figure*}[t]
\centering
\includegraphics[width=1\textwidth]{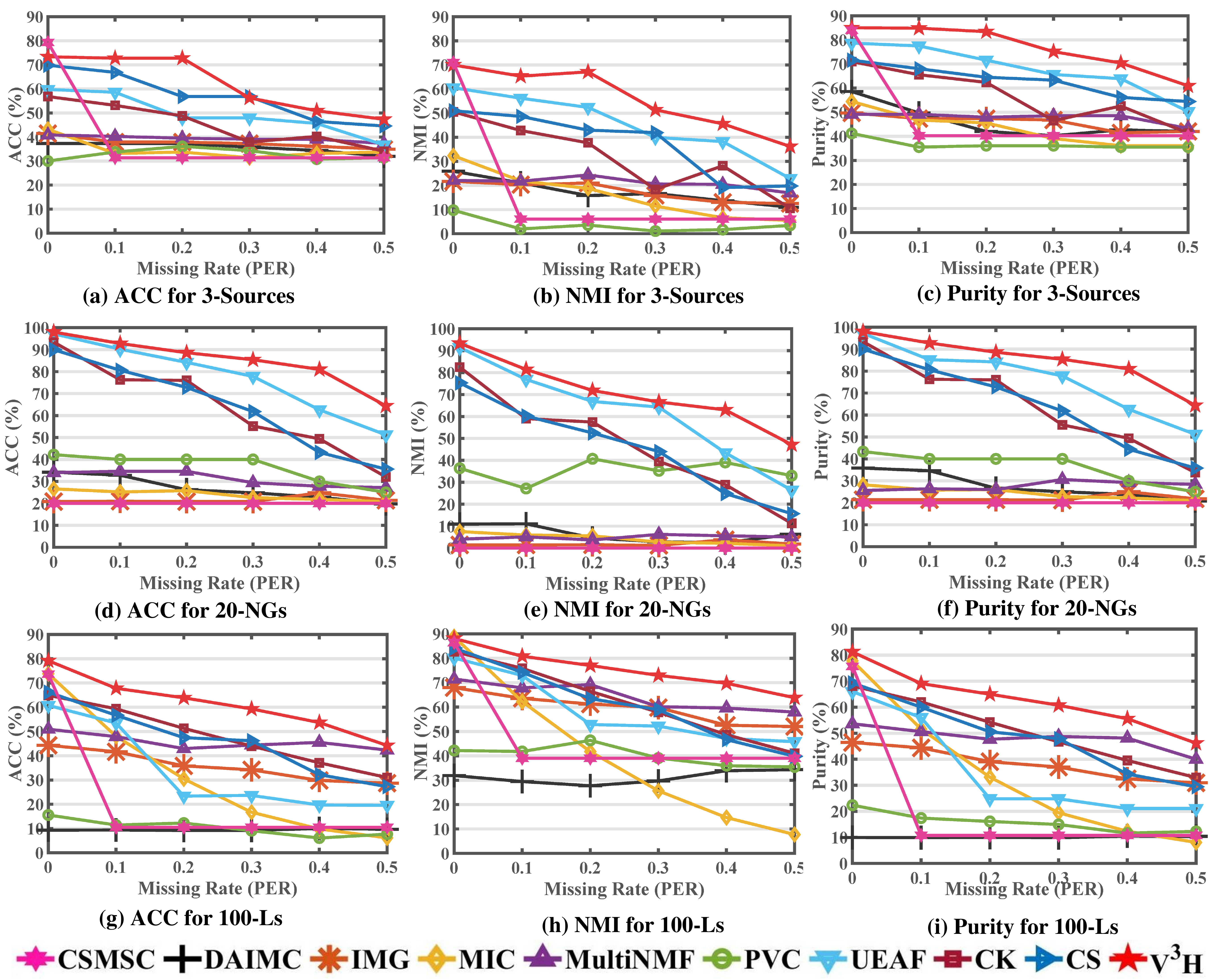}
\vspace{-0.75cm}
\caption{Results on 3-Sources, 20-NGs, and 100-Ls datasets.}
\label{fig:diyibufen}
\end{figure*}
\begin{figure*}[t]
\centering
\includegraphics[width=1\textwidth]{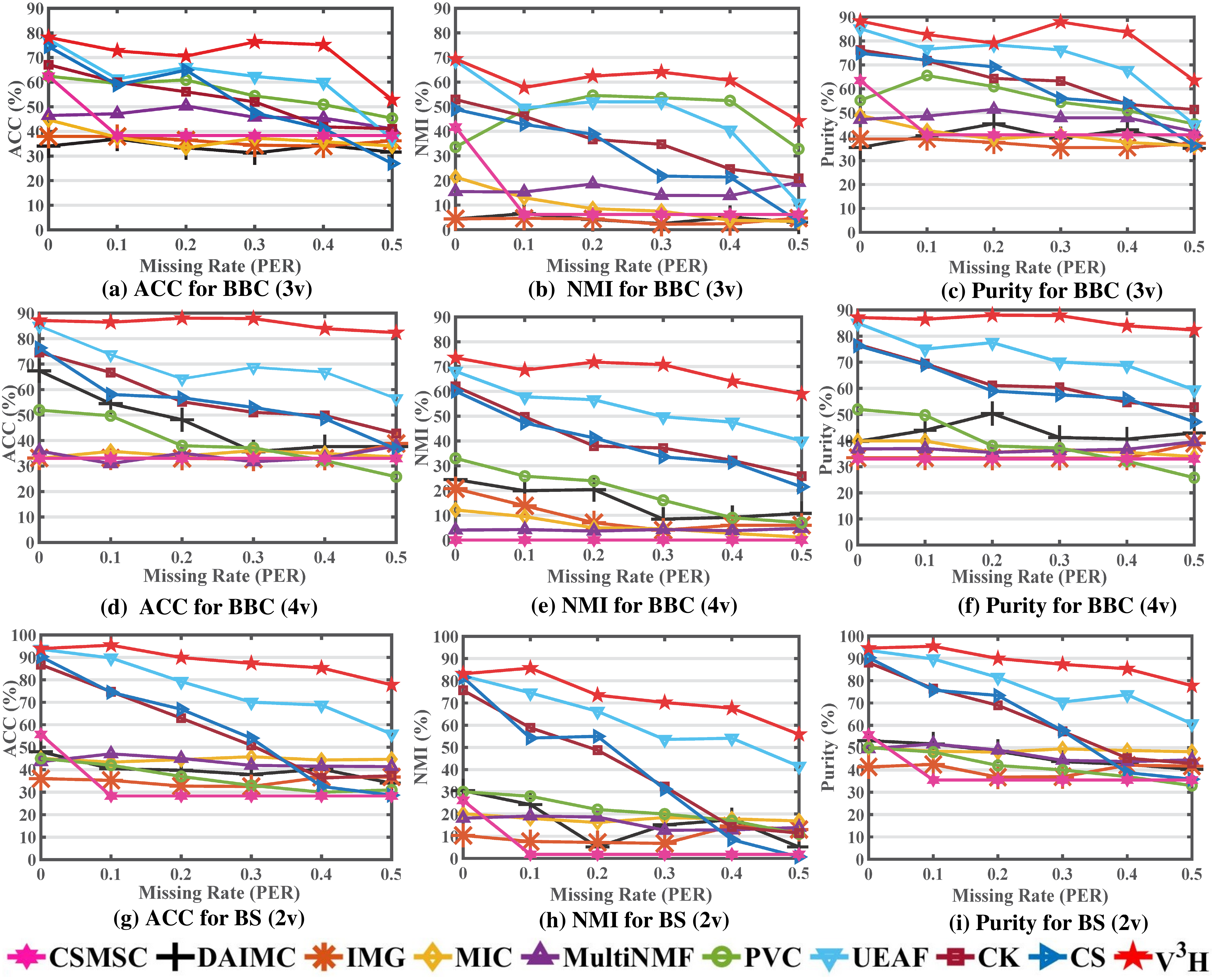}
\vspace{-0.75cm}
\caption{Results on BBC (3v), BBC (4v) and BS (2v) datasets.}
\label{fig:dierbufen}
\end{figure*}
\begin{figure*}[t]
\centering
\includegraphics[width=1\textwidth]{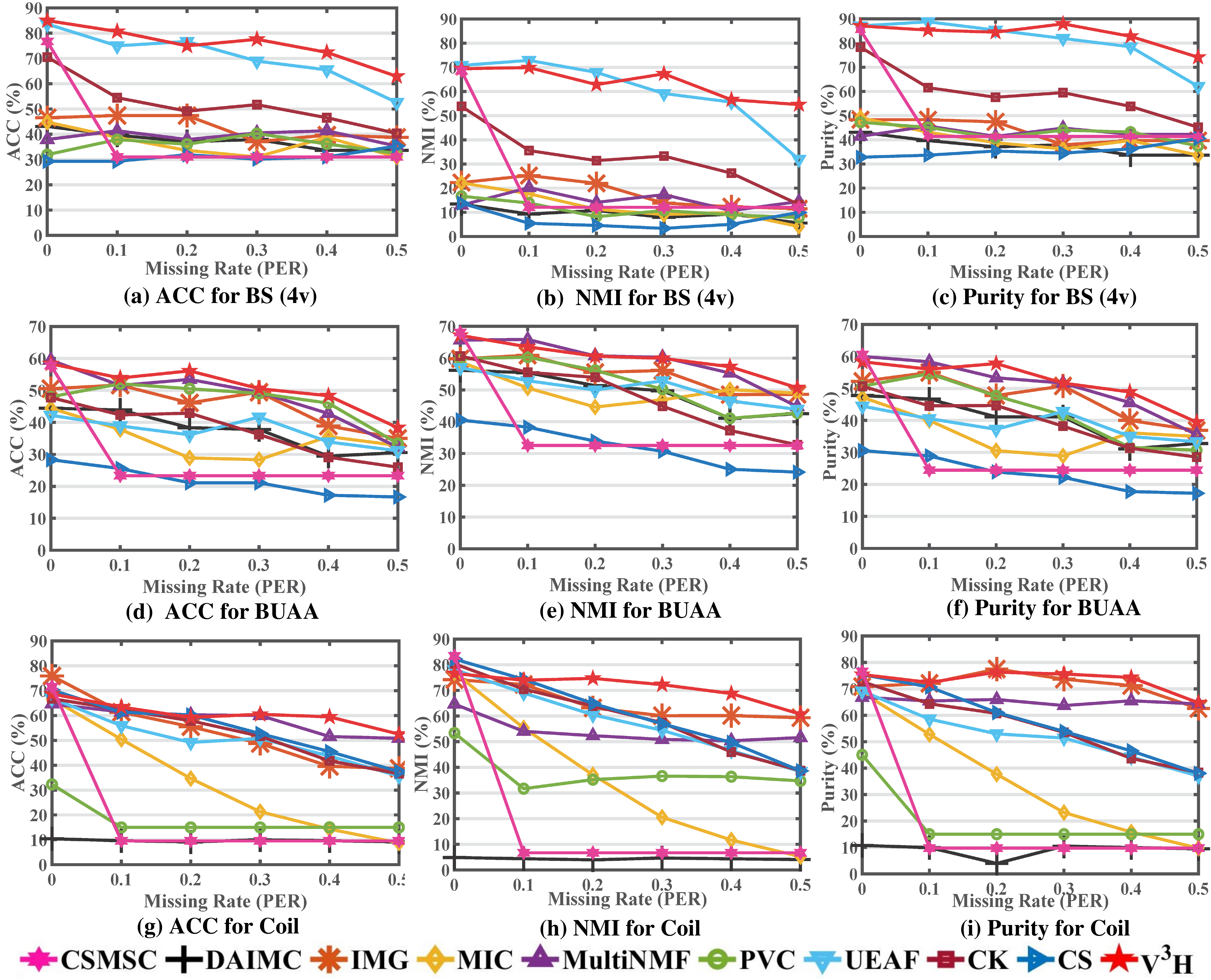}
\vspace{-0.75cm}
\caption{Results on BS (4v), BUAA, and Coil datasets.}
\label{fig:disanbufen}
\end{figure*}
\begin{figure*}[t]
\centering
\includegraphics[width=1\textwidth]{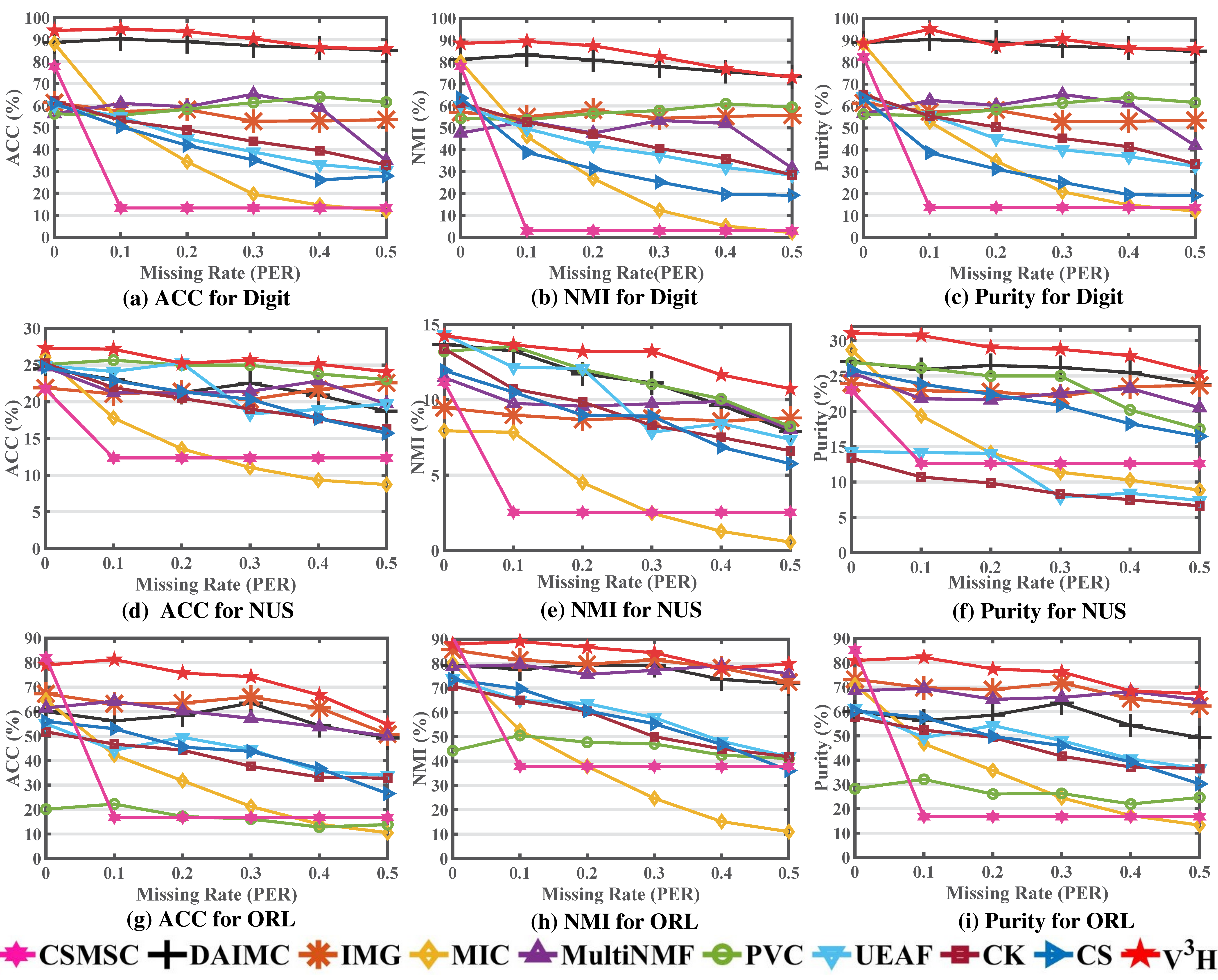}
\vspace{-0.75cm}
\caption{Results on Digit, NUS, and ORL datasets.}
\label{fig:disibufen}
\end{figure*}
\begin{figure*}[t]
\centering
\includegraphics[width=1\textwidth]{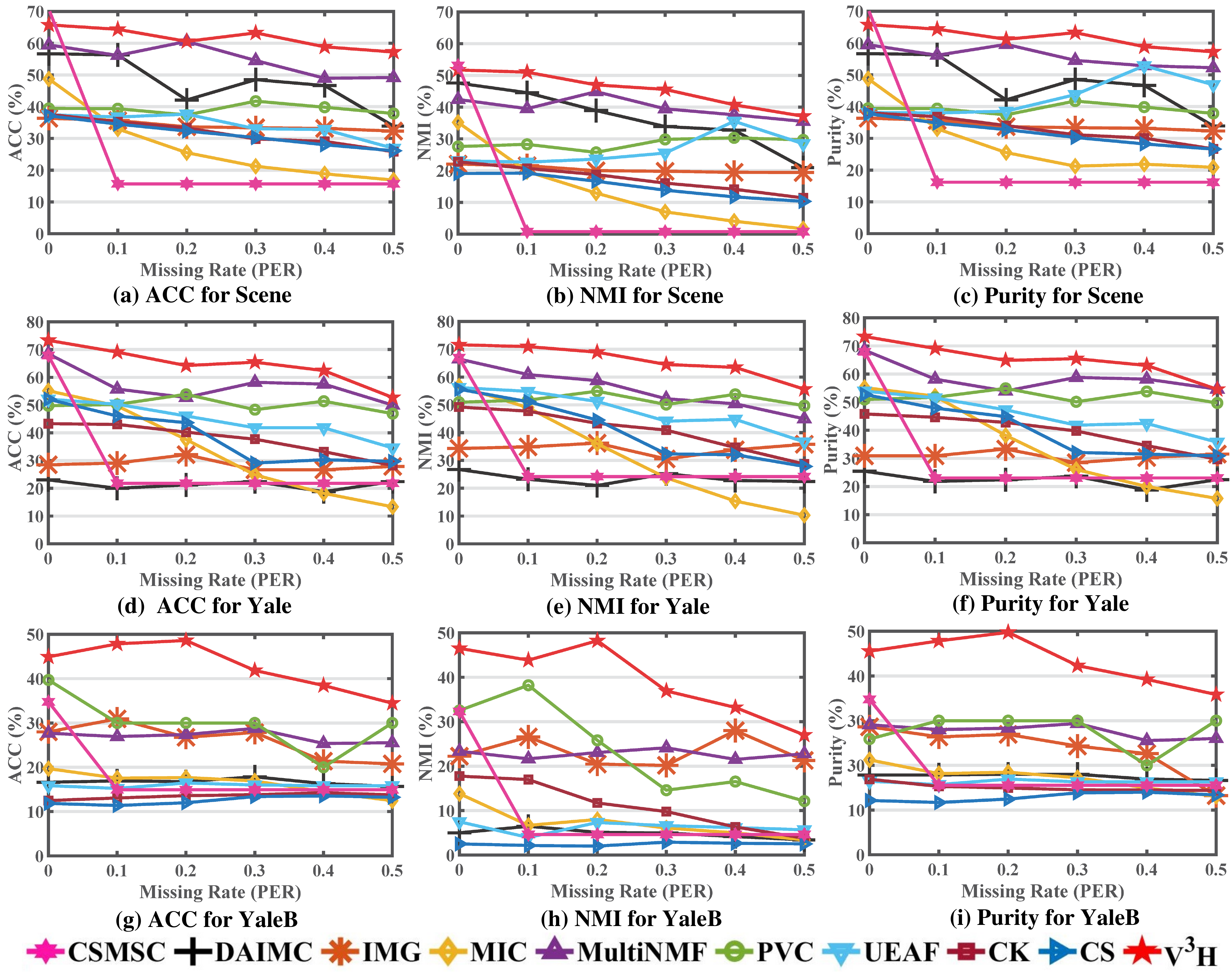}
\vspace{-0.75cm}
\caption{Results on Scene, Yale, and YaleB datasets.}
\label{fig:diwubufen}
\end{figure*}
\begin{figure*}[t]
\centering
\subfigure[Study for $\alpha$ and $\beta$]{\label{fig:NMI_3D} \includegraphics[width=0.315\textwidth]{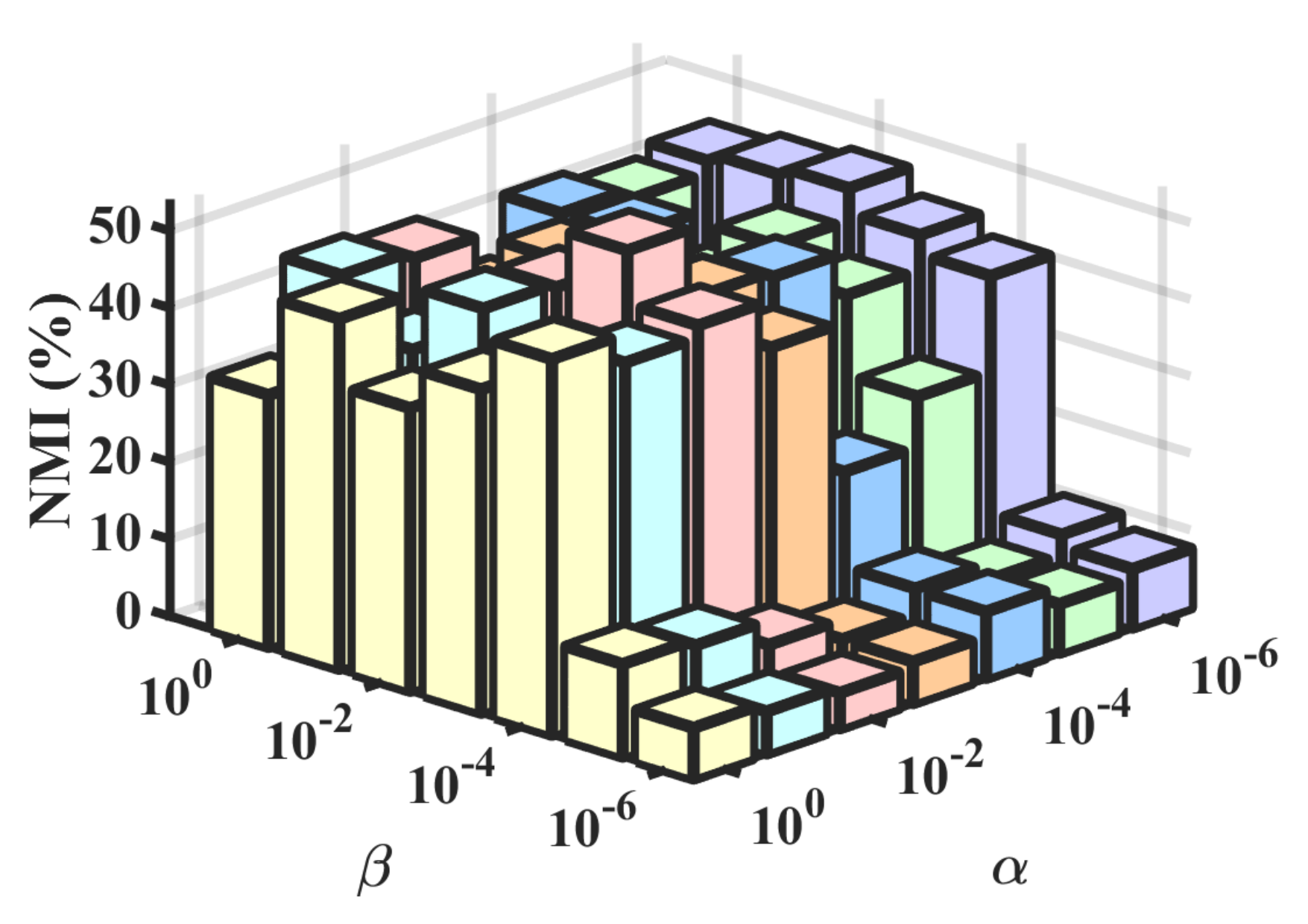} }
\subfigure[Study for $\gamma$]{\label{fig:NMI_2D} \includegraphics[width=0.315\textwidth]{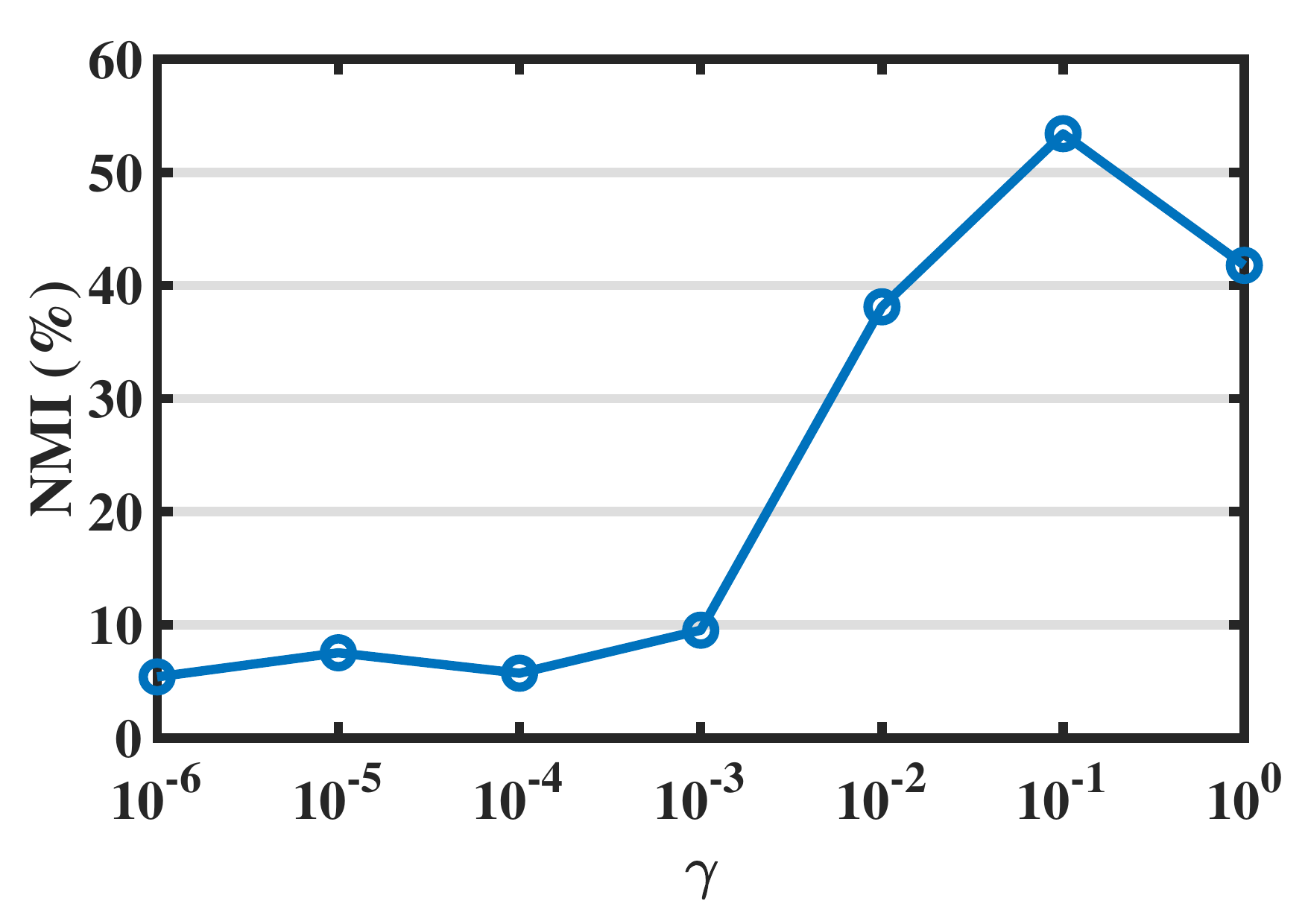} }
\subfigure[Convergence study on Coil]{\label{fig:huigui_COIL20} \includegraphics[width=0.315\textwidth]{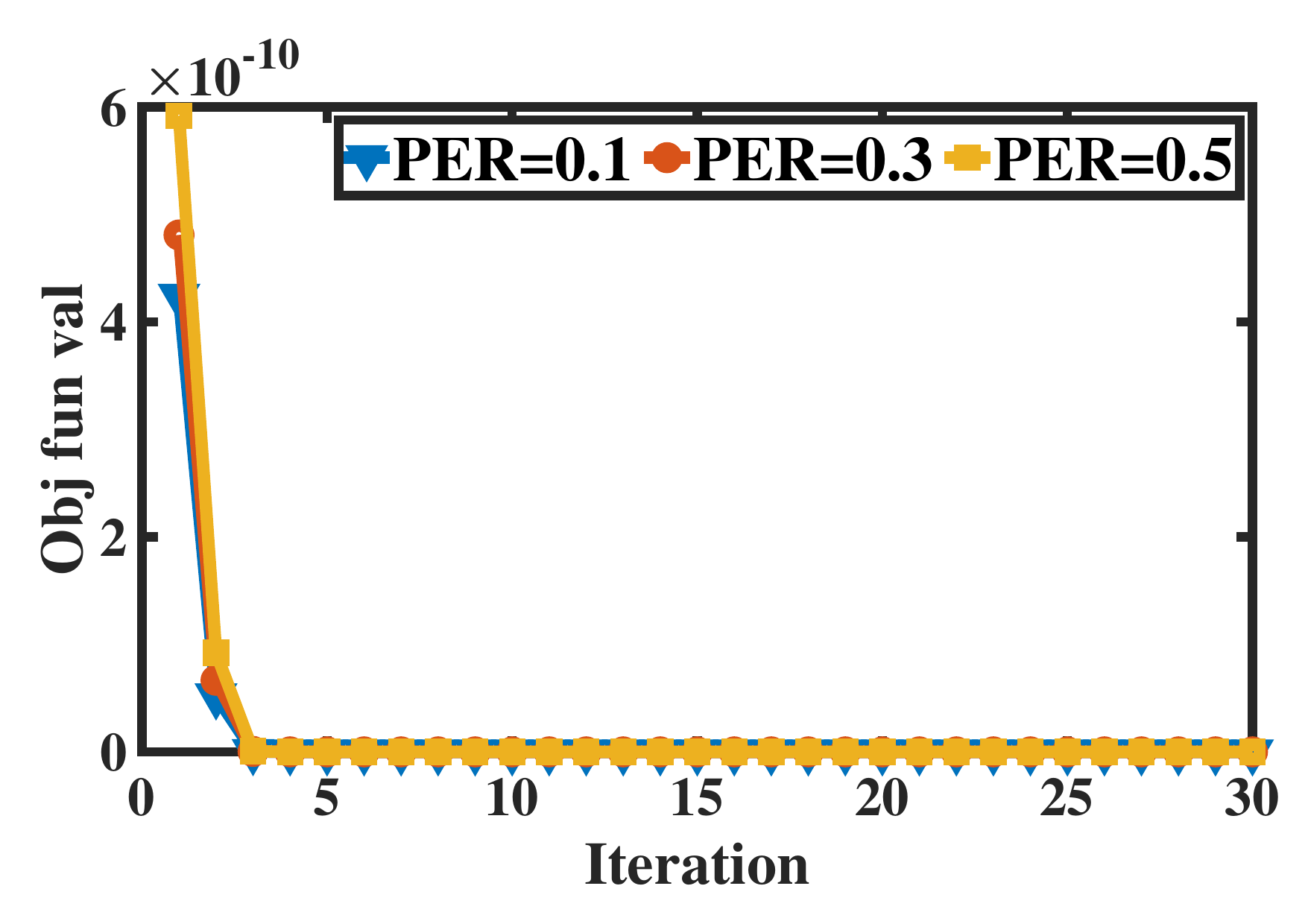} }
\caption{Convergence and parameter studies.}
\label{fig:chaocanshu}
\end{figure*}
We conduct experiments on fifteen well-known popular datasets: 3-Sources\footnote{\url{http://mlg.ucd.ie/datasets/3sources.html}.}, 20 New Groups (20-NGs)\footnote{\url{http://lig-membres.imag.fr/grimal/data.html}.}, 100 Leaves (100-Ls)\footnote{\url{https://archive.ics.uci.edu/ml/datasets/One-hundred+plant+species+leaves+data+set}.}, BBC with 3 views (BBC (3v))\footnote{\url{http://mlg.ucd.ie/datasets/segment.html}.}, BBC with 4 views (BBC (4v))\footnote{\url{http://mlg.ucd.ie/datasets/segment.html}.}, BBCSport with 2 views (BS (2v))\footnote{\url{http://mlg.ucd.ie/datasets/segment.html}.}, BBCSport with 4 views (BS (4v))\footnote{\url{http://mlg.ucd.ie/datasets/segment.html}.}, BUAA \cite{huang2012buaa}, Coil~\cite{NENE1996Columbia}, Digit\footnote{\url{http://archive.ics.uci.edu/ml/datasets.html}.}, NUS~\cite{Chua2009NUS}, ORL~\cite{Harter1994Harter}, Outdoor Scene (Scene)~\cite{article}, Yale\footnote{\url{http://vision.ucsd.edu/content/yale-face-database}.}, and Extended YaleB (YaleB)~\cite{KriegmanFrom}.

For these datasets, most multi-view clustering algorithms often cluster their common subsets for simplicity. To compare fairly with these algorithms, we use the same size datasets for clustering.
The important statistics of used datasets are shown in Table~\ref{dataset}, and the detailed statistics are as follows:

1) \textbf{3-Sources}: it is a news dataset that has 948 samples collected from 3 views: BBC with 3560 features, Reuters with 3631 features and The Guardian with 3068 features. Following~\cite{wen2019unified}, we select a subset with 169 samples, which are categorized into  6 clusters.

2) \textbf{20-NGs}: it is a document dataset that has 500 samples from 3 views. Each view has 500 features.
These documents are categorized into 5 clusters.

3) \textbf{100-Ls}: it has 1600 samples from 100 clusters. Each sample appears in 3 views, and each view has 64 features.

4) \textbf{BBC (3v)} and 5) \textbf{BBC (4v)}: the original BBC dataset has  685 samples, which are described by 3-4 views categorized into 5 clusters.
Following~\cite{WengZCPC20}, we choose a subset with 282 samples described by 3 views. These views include 2582 features, 2544 features, 2465 features, respectively.
Following~\cite{8662703}, we also choose the full dataset described by 4 views. These views include 4659 features, 4633 features, 4665 features, and 4684 features, respectively.

6) \textbf{BS (2v)} and 7) \textbf{BS (4v)}: the original BBCSport dataset has 737 samples, which are described by 2-4 views and categorized into 5 clusters.
Following~\cite{ZhouLLZLZY20}, we select a subset with 544 samples described by 2 views.
These views include 3183 features and 3203 features, respectively.
Following~\cite{wen2019unified}, we also use a subset with 116 samples described by 4 views. These views include 1991 features, 2063 features, 2113 features, and 2158 features, respectively.

8) \textbf{BUAA}: it has 180 image samples, which are categorized into 10 clusters. Each image is described by 2 views, and each view has 100 features.

9) \textbf{Coil}: it is an image dataset that has 1440 samples consisting of 20 clusters. Each image appears in three different views: Intensity with 1024 features, LBP feature with 3304 features, and Gabor feature with 6750 features.

10) \textbf{Digit}: it has 2000 samples categorized into 10 clusters.
It has 5 views: FOU with 76 features, FAC with 216 features, KAR with 64 features, PIX with 240 features, and ZER with 47 features.

11) \textbf{NUS}: it consists of 2400 samples categorized into 12 clusters.
Besides, each image is described by 6 views: CH with 65 features, CM with 226  features, CORR with 145 features, ED with 74 features and WV with 129 features.

12) \textbf{ORL}: it is an image dataset, which is described by 3 views. The dataset has 400 samples categorized into 40 clusters. These views include 4096 features, 3304 features and 6750 features, respectively.

13) \textbf{Scene}: it is an image dataset that  has  2,688 samples consisting of  8 clusters. Each image is described by 4 views: GIST with 512 features, color moment with 432 features, HOG with 256 features and LBP with 48 features.

14) \textbf{Yale}: it has 165 samples categorized into 15 clusters. It is described by 3 views: Intensity with 4096 features, LBP with 3304 features and Gabor with 6750 features.


15) \textbf{YaleB}: it is a face image dataset that has 2414 samples described by 3 views.
These views include 2500 features, 3304 features and 6750 features, respectively.
Following~\cite{WangGLZL17}, we choose a subset with 650 samples.
\subsection{Compared Methods} \label{section:compared}
We compare our proposed \textbf{V$^3$H} with the following state-of-the-art methods, which are most relevant to our work:

1) \textbf{CSMSC}~\cite{luo2018consistent}
learns a consistent representation and a set of specific representations from complete multi-view data;

2) \textbf{DAIMC}~\cite{hu2018doubly} extends MIC based on weighted semi-NMF and $L_{2,1}$-Norm regularization regression;

3) \textbf{IMG}~\cite{zhao2016incomplete} transforms the collected incomplete multi-view data to a complete representation in a latent space;

4) \textbf{MIC}~\cite{shao2015multiple} extends MultiNMF based on weighted NMF and $L_{2,1}$-Norm regularization;

5) \textbf{MultiNMF}~\cite{liu2013multi} extends NMF to multi-view scenes by jointing these views;

6) \textbf{PVC}~\cite{li2014partial} aligns the same samples in  different views by constructing a latent subspace;

7) \textbf{UEAF}~\cite{wen2019unified} learns a consensus representation for all views by extending MIC;

8) \textbf{Concat-K-means clustering (CK)}, and 9) \textbf{Concat-Spectral clustering (CS)}. CK and CS are two baselines that concatenate all views into one single view for clustering. (i) CK: we first fill the missing samples with the average features for each view. Then we concatenate the features of all the views, and perform K-means clustering on the concatenated view.
(ii) CS: similar to CK, we perform spectral clustering on the concatenated view.

For CSMSC and MultiNMF, they cannot directly handle incomplete multi-view data. Following~\cite{hu2018doubly}, we first fill the missing samples with average feature values. Then we perform CSMSC and MultiNMF. Since IMG and PVC cannot perform clustering on the incomplete data with more than two views, we use these methods on all the two-views combinations and report average results for fairness. Since our proposed V$^3$H has three parameters, $\alpha$, $\beta$, and $\gamma$, we adjust them to get the best performance (see Section~\ref{subsection:chaocanshu}). Since $p$, $\eta$, $\tau$, and $w_i$ are adjustable as needed, we set $p=1$, $\eta=10^{-3}$, $\tau=10^{-2}$, and $w_i=1$ in our experiment for simplicity.

Following \cite{zhao2016incomplete}, we repeat each incomplete multi-view clustering experiment $10$ times to obtain the average performance. Following~\cite{hu2018doubly} and \cite{shao2015multiple}, we randomly delete some samples from each view to get incomplete views. We set the missing rate (PER) from $0$ (each view is complete) to $0.5$ (each view has $50\%$ samples missing) with $0.1$ as an interval.

\textbf{Evaluation Metric}:
Following~\cite{wen2019unified}, we evaluate the experimental results by three popular metrics: Accuracy (ACC), Normalized Mutual Information (NMI) and Purity. For these metrics, the larger value represents better performance.
\subsection{Clustering Performance and Analysis}  \label{subsection:shiyanjieguo}
Fig.~\ref{fig:diyibufen}-\ref{fig:diwubufen} show the clustering results on these real-world datasets. Obviously, our proposed V$^3$H significantly performs better than other state-of-the-art methods in most cases. Especially, when we cluster the YaleB dataset with PER=0.2  (Fig.
7(g), 7(h), and 7(i)), V$^3$H gains large improvements at least $ 23.24\%$ in ACC, $ 22.42\%$ in NMI, and $22.46\%$ in Purity over the best performing compared method PVC.

For convenience, we first divide the compared methods into 3 groups: single-view methods (CK and CS), two-view methods (IMG and PVC) and multi-view methods (CSMSC, DAIMC, MIC, MultiNMF and UEAF). Then, based on the experimental results, we compare and analyze the performance of different groups. Finally, we critically analyze each method.

\textbf{V$^3$H versus single-view methods}: compared with single-view methods, V$^3$H achieves better performance on all the datasets. For instance, when clustering the Scene dataset with PER=0.5 (Fig. 7(a), 7(b), and 7(c)), compared with CK and CS, V$^3$H raises the clustering results at least $31.10\%$ in ACC, $25.73\%$ in NMI, and $30.54\%$ in Purity, respectively. It is because the Scene dataset has 4 views.
CK and CS only simply concatenate these views, which cannot learn the relationship information between different views.
On the contrary, V$^3$H can extract the relationship information by integrating different views.
Therefore, integrating effectively different views is necessary for multi-view clustering.

\textbf{V$^3$H versus two-view methods}: compared with two-view methods, V$^3$H obtains better clustering results in all the cases. For the datasets with more than 2 views (e.g., BS (4v), 3-Sources, 20-NGs, etc.), V$^3$H performs better than PVC and IMG.
When we cluster the BS (4v) dataset with PER=0.5 (Fig. 5(a), 5(b) and 5(c)), compared with PVC and IMG, V$^3$H raises the clustering results at least $24.14\%$ in ACC, $43.10\%$ in NMI, and $34.47\%$ in Purity, respectively.
The reason is that PVC and IMG can only integrate two views, and structural information of the rest views is not learned, which illustrates the necessity of integrating all the views.
As for the two-view datasets (e.g., BUAA and BS (2v)), V$^3$H still outperforms PVC and IMG.
For the BUAA dataset with PER=0.2  (Fig. 5(d), 5(e) and 5(f)), compared with PVC and IMG, V$^3$H improves the performance at least $3.51\%$ in ACC, $1.07\%$ in NMI, and $5.97\%$ in Purity, respectively. The main reason is that the BUAA dataset contains the nonlinear structural information, V$^3$H can learn the information based on the linear kernel used in the Laplacian for spectral clustering.

\textbf{V$^3$H versus multi-view methods}: compared with multi-view methods, V$^3$H also achieves better clustering performance in most cases.  When we cluster the BBC (4v) dataset with PER=0.5 (Fig. Fig. 4(d), 4(e), and 4(f)), compared with multi-view methods, V$^3$H raises the performance at least $25.84\%$ in ACC, $18.95\%$ in NMI, and $22.92\%$ in Purity, respectively.
It is because each view of the BBC (4v) dataset includes the unique information.
V$^3$H can learn the unique information through the corresponding variation matrix, while the unique information is ignored by multi-view methods. More impressively, as PER on the dataset increases,
V$^3$H achieves satisfactory and relatively stable clustering results, while the performance of all multi-view methods drops significantly. It is because based on the subspace decomposition in Eq.~\eqref{Zfenjie}, V$^3$H can learn the information from both the presented samples and the missing samples. But these multi-view methods can only learn information from the presented samples.

In summary, all the methods are analyzed as follows:

1) \textbf{CK and CS}: for most clustering tasks, the performance of CK and CS are close because they simply concatenate all views. Although this concatenation is easy to operate, CK and CS always perform poorly due to ignoring the relationship between different views. Therefore, CK and CS often have limited applications in multi-view datasets.

2) \textbf{PVC and IMG}: for the incomplete two-view datasets, PVC can obtain pretty clustering performance by establishing a latent subspace from two views. Similarly, IMG also performs well in incomplete two-view clustering by introducing manifold learning into PVC.
But when clustering the data with more than $2$ incomplete views, PVC and IMG cannot obtain an optimal latent subspace due to ignoring the global structure of the multi-view data. Thus, it is difficult for them to obtain satisfactory results in incomplete multi-view clustering tasks.

3) \textbf{CSMSC, DAIMC, MIC, MultiNMF, and UEAF}: when clustering complete multi-view datasets, CSMSC, MIC, and MultiNMF can perform well by learning a subspace from each view. As the missing rate increases, the clustering results of these three methods drop significantly.
The reason is that these three methods simply fill the missing samples with the average feature values, which neglects the hidden information of the missing samples.
Since real-world data are often incomplete, these three methods are difficult to be widely used.
For these datasets with little alignment information, DAIMC and UEAF always obtain unsatisfactory clustering results because they learn the consensus representation by aligning these views. These drawbacks limit the application of these methods.

4) \textbf{Our proposed V$^3$H}: by aligning cluster indicator matrices from different views and learning the low-rank representation, V$^3$H can perform satisfactorily in most cases, which shows its wide application.
Moreover, when the missing rate is relatively large (e.g., PER=0.4 or PER=0.5), V$^3$H has more obvious superiority over other state-of-the-arts, which illustrates its effectiveness in high-incompleteness applications.
\subsection{Parameter Sensitivity}\label{subsection:chaocanshu}
In terms of $\{\alpha, \beta, \gamma\}$, we conduct the hyper-parameter experiments on 3-Sources dataset. Similar to~\cite{hu2018doubly}, we set PER=0.5 and report V$^3$H's NMI versus $\alpha$, $\beta$, and $\gamma$ within the set of $\{10^0,10^{-1},10^{-2},10^{-3}, 10^{-4},10^{-5},10^{-6}\}$.

As shown in Fig.~\ref{fig:NMI_3D} and \ref{fig:NMI_2D}, our proposed V$^3$H obtains stable and satisfactory clustering performance across a wide range of these parameters. Thus, V$^3$H is insensitive to the variation of the parameters.
Also, V$^3$H obtains the best clustering results when we set $\alpha$=$10^{-3}$, $\beta$= $10^{-4}$ and $\gamma$=$10^{-1}$, which are the recommended values.
\subsection{Convergence Study}\label{subsection:shoulian}
Based on the recommended values of these hyper-parameters, we study the convergence by conducting the experiments on the Coil dataset with different PERs, i.e., PER=0.1, PER=0.3, PER=0.5.
Fig.~\ref{fig:huigui_COIL20} show that the convergence curve versus the iteration number, and ``Obj fun val" represents ``objective function value", which is calculated by $(||\bm{M}||_{\eta}+\sum_v(\beta||\bm{E}^{(v)}||_{\tau}+\alpha\text{Tr}(\bm{F}^{(v)^T}\bm{L}_N^{(v)}\bm{F}^{(v)})-\gamma \text{Tr}(\bm{F}^{(v)}\bm{F}^{(v)^T}\bm{F}^{\ast}\bm{F}^{\ast^T})))$ $/(||\bm{X}^{(v)}-\bm{X}^{(v)}\bm{Z}^{(v)}-\bm{E}^{(v)}||_F^2+||\bm{W}^{(v)^T}\bm{Z}^{(v)}\bm{W}^{(v)}-\bm{M}+p\bm{N}^{(v)}||_F^2)$, similar to~\cite{wen2018incompleteb}. Obviously, our proposed V$^3$H has converged just after 10 iterations for all PERs, which shows its fast convergence.


Note that the convergence curves under different PERs are close to each other. This is because, based on Eq.~\eqref{Zfenjie}, we can learn the information of all samples (including presented samples and missing samples). When PER changes, the dimensions of $\bm{M}$ and $\bm{N}^{(v)}$ do not change.

Besides, when the iteration number is the same, $obj(PER=0.5)>obj(PER=0.3)>obj(PER=0.1)$. The reason is as follows: in the objective function, only the dimensions of $\bm{E}$ and $\bm{Z}^{(v)}$ will change as the missing rate increases. Thus, only the values of $||\bm{E}^{(v)}||_{\tau}$ and $||\bm{X}^{(v)}-\bm{X}^{(v)}\bm{Z}^{(v)}-\bm{E}^{(v)}||_F^2$ will rely on missing rate. In fact, for a robust algorithm, its error matrix $\bm{E}$ will generally be small. Therefore, we can approximate the numerator of $obj$ as a constant. As the missing rate increases, the value of $||\bm{X}^{(v)}-\bm{X}^{(v)}\bm{Z}^{(v)}-\bm{E}^{(v)}||_F^2$ will decrease, and the objective function value will increase.

\section{Conclusion and Future Work}  \label{section:con}
In this paper, we propose a novel \textbf{V}iew \textbf{V}ariation and \textbf{V}iew \textbf{H}eredity approach (V$^3$H) for incomplete multi-view clustering. As far as we know, V$^3$H is the \textbf{first} attempt to introduce genetics into the clustering method. Also, it can learn the consistent information and the unique information based on view variation and view heredity respectively.
Extensive experiments on fifteen datasets demonstrate the superiority of V$^3$H over other state-of-the-art methods. Impressively, when clustering the YaleB dataset with the missing rate of 0.2, V$^3$H
improves
at least $ 23.24\%$ in ACC, $22.42\%$ in NMI, and $22.46\%$ in Purity over the best performing compared method.

Our proposed V$^3$H is an offline approach for high-dimensional incomplete multi-view clustering.
A larger challenge is to cluster large-scale high-dimensional data.
In the future, we will introduce online learning into V$^3$H for the large-scale high-dimensional data about COVID-19. We collect a large amount of data about COVID-19 every day, and online learning is an effective way to process these data.
Based on online learning, we will process these data.

%
%
%
%
%

\ifCLASSOPTIONcaptionsoff
  \newpage
\fi
\bibliographystyle{IEEEtran}
\bibliography{IEEEtran}
\end{document}